\newtheorem{theorem}{\textbf{Theorem}}
\newtheorem{proposition}{\textbf{Proposition}}
\newtheorem{example}{\textbf{Example}}
\newtheorem{lemma}{\textbf{Lemma}}
\newtheorem{remark}{\textbf{Remark}}
\newcommand{\dv}{\mathbf} % determenistic vector
\newcommand{\bm}{\mathbf} % determenistic vector
\newcommand{\mc}{\mathcal} % determenistic vector
\newcommand{\mkv}{-\!\!\!\!\minuso\!\!\!\!-}
\newcommand*\xbar[1]{%
	\hbox{%
		\vbox{%
			\hrule height 0.5pt % The actual bar
			\kern0.5ex%         % Distance between bar and symbo
			\hbox{%
				\kern-0.1em%      % Shortening on the left side
				\ensuremath{#1}%
				\kern-0.1em%      % Shortening on the right side
			}%
		}%
	}%
} 
\algnewcommand{\Inputs}[1]{%
	\State \textbf{input:}
	%  \Statex   \hspace*{\algorithmicindent}
	\parbox[t]{.8\linewidth}{\raggedright #1}
}
\algnewcommand{\Initialize}[1]{%
	\State \textbf{initialization}
	%  \Statex \hspace*{\algorithmicindent}
	\parbox[t]{.95\linewidth}{\raggedright #1}
}
\algnewcommand{\Outputs}[1]{%
	\State \textbf{output:}
	%  \Statex   \hspace*{\algorithmicindent}
	\parbox[t]{.8\linewidth}{\raggedright #1}
}
\begin{document}
	\fontencoding{OT1}\fontsize{9.6}{11.25pt}\selectfont
	\title{In-Network Learning: Distributed Training and Inference in Networks}
	\author{Matei~Moldoveanu$^{\nmid}$ \qquad Abdellatif Zaidi$\:^{\dagger}$$\:^{\nmid}$\vspace{0.3cm}\\
		$^{\dagger}$ Universit\'e Paris-Est, Champs-sur-Marne 77454, France\\
		$^{\nmid}$ Mathematical and Algorithmic Sciences Lab., Paris Research Center, Huawei France \\
		\{matei.moldoveanu@huawei.com, abdellatif.zaidi@univ-eiffel.fr\}
	}

	\maketitle
	\pagenumbering{gobble}
	
	\begin{abstract}
	In this paper, we study inference and learning over networks that can be modeled by a directed graph. Specifically, nodes are equipped each with a (possibly different) neural network and some of them possess data that is relevant to some inference task which needs to be performed at the end (fusion) node, with the risk measured under logarithmic loss. The graph defining the network topology is fixed and known. We develop a learning algorithm and an architecture that make use of the multiple data streams and processing units available distributively, not only during the training phase but also during the inference phase. In particular, the analysis reveals how inference propagates and fuses across a network. We study the design criterion of our proposed method and its bandwidth requirements. Also, we discuss implementation aspects using neural networks in typical wireless radio access; and provide experiments that illustrate benefits over state-of-the-art techniques.
	\end{abstract}

	\begin{IEEEkeywords}
		distributed learning, AI at the edge, inference over graphs 
	\end{IEEEkeywords}
	
	\section{Introduction}
	
	The unprecedented success of modern machine learning (ML) techniques in areas such as computer vision~\cite{obj_det_10_years_survey}, neuroscience~\cite{GLASER2019126}, image processing~\cite{medical_images_survey}, robotics~\cite{robotics_reinforced_learning} and natural language processing~\cite{VinyalsL15} has lead to an increasing interest for their application to wireless communication systems over the recent years. Early efforts along this line of work fall in what is sometimes referred to as the "learning to communicate" paradigm in which the goal is to automate one or more communication modules such as the modulator-demodulator, the channel coder-decoder, or others, by replacing them with suitable ML algorithms. Although important progress has been made for some particular communication systems, such as the molecular one~\cite{molecular_com}, it is still not clear yet whether ML techniques can offer a reliable alternate solution to model-based approaches, especially as typical wireless environments suffer from time-varying noise and interference. 
	
	Wireless networks have other important intrinsic features which may pave the way for more cross-fertilization between ML and communication, as opposed to applying ML algorithms as black boxes in replacement of one or more communication modules. For example, while in areas such as computer vision, neuroscience, and others, relevant data is generally available at one point, it is typically highly distributed across several nodes in wireless networks. Examples include amplitude or phase information or the so-called radio-signal strength indicator (RSSI) of a user's signal, which can be used for localization purposes in fingerprinting-based approaches~\cite{CSI_fingerprinting}, and are typically available at several base stations. A prevalent approach for the implementation of ML solutions in such cases would consist in collecting all relevant data at one point (a cloud server) and then train a suitable ML model using all available data and processing power. Because the volumes of data needed for training are generally large, and with the scarcity of network resources (e.g., power and bandwidth), that approach might not be appropriate in many cases, however. In addition, some applications might have stringent latency requirements which are incompatible with sharing the data, such as in automatic vehicle driving. In other cases, it might be desired not to share the raw data for the sake of enhancing the privacy of the solution, in the sense that infringing the user's privacy is generally more easily accomplished from the raw data itself than from the output of a neural network (NN) that takes that data as input.
	
	The above has called for a new paradigm in which intelligence moves from the heart of the network to its edge, which is sometimes referred to as "Edge Learning". In this new paradigm, communication plays a central role in the design of efficient ML algorithms and architectures because both data and computational resources, which are the main ingredients of an efficient ML solution, are highly distributed. A key aspect towards building suitable ML-based solutions is whether the setting assumes only the training phase involves distributed data (sometimes referred to as distributed \textit{learning} such as the Federated Learning (FL) of~\cite{mcmahan2017,fedlearn2017}) or if the inference (or test) phase too involves distributed data. 
	
	There is a vast body of literature on problems related to distributed estimation and detection (see, e.g., ~\cite{xiao_distributed_2006, kreidl_decentralized_2010,chamberland_wireless_2007, Tsitsiklis93decentralizeddetection} and references therein). In particular, most related to this paper, a growing line of works focuses on developing distributed learning algorithms and architectures. Examples include \cite{simic_learning-theory_2003} and \cite{predd_distributed_2006} which use kernel methods and \cite{nguyen_nonparametric_2005} and \cite{jagyasi_data_2015} which use marginalized kernels and NNs, respectively. Perhaps most popular and related to our work, however, is the FL of~\cite{mcmahan2017,fedlearn2017} which, as we already mentioned, is most suitable for scenarios in which the training phase has to be performed distributively while the inference phase has to be performed centrally at one node. To this end, during the training phase nodes (e.g., base stations) that possess data are all equipped with copies of a single NN model which they simultaneously train on their locally available data-sets. The learned weight parameters are then sent to a cloud- or parameter server (PS) which aggregates them, e.g. by simply computing their average. The process is repeated, every time re-initializing using the obtained aggregated model, until convergence. The rationale is that, this way, the model is progressively adjusted to account for all variations in the data, not only those of the local data-set. For recent advances on FL and applications in wireless settings, the reader may refer to~\cite{TBZNHC019,M.-AG20,YLQP20} and references therein. Another relevant work is the Split Learning (SL) of~\cite{gupta2018distributed} in which, for a multiaccess type network topology, a two-part NN model that is split into an encoder part and a decoder part is learned sequentially. The decoder does not have its own data; and, in every round, the NN encoder part is fed with a distinct data-set and its parameters are initialized using those learned from the previous round. The learned two-part model is then used as follows during the inference: one part of this model is used by an encoder and the other one by a decoder. Another variation of SL, sometimes called "vertical SL", was proposed recently in \cite{ceballos_splitnn-driven_2020}. The approach uses vertical partitioning of the data; and, in the special case of a multi-access topology, it is similar to the in-network learning solution that we propose in this paper. 
	
	Compared to both SL and FL, which consider only the training phase to be distributed, in this paper we focus on the problem in which the inference phase also takes place distributively. More specifically, in this paper, we study a network inference problem in which some of the nodes possess each, or can acquire, part of the data that is relevant for inference on a random variable $Y$. The node at which the inference needs to be performed is connected to the nodes that possess the relevant data through a number of intermediate other nodes. We assume that the network topology is fixed and known. This may model, e.g., a setting in which a macro BS needs to make inference on the position of a user on the basis of summary information obtained from correlated CSI measurements $X_1,\hdots,X_J$ that are acquired at some proximity edge BSs. Each of the edge nodes is connected with the central node either directly, via an error free link of given finite capacity, or via intermediary nodes. While in some cases it might be enough to process only a subset of the $J$ nodes, we assume that processing only a (any) strict subset of the measurements cannot yield the desired inference accuracy; and, as such, the $J$ measurements $X_1,\hdots,X_J$ need to be processed during the inference or test phase.

	\begin{example}{(Autonomous Driving)}
		One basic requirement of the problem of autonomous driving is the ability to cope with problematic roadway situations, such as those involving construction, road hazards, hand signals and reckless drivers. Current approaches mostly rely on equipping the vehicle with more on-board sensors. Clearly, while this can only allow a better coverage of the navigation environment, it seems unlikely to successfully cope with the problem of blind spots due, e.g., to obstruction or hidden obstacles. In such contexts, external sensors such as other vehicles' sensors, cameras installed on the roofs of proximity buildings or wireless towers may help perform a more precise inference, by offering a complementary, possibly better, view of the navigation scene. An example scenario is shown in Figure~\ref{fig-autonomous-driving}. The application requires real-time inference which might be incompatible with current cellular radio standards, thus precluding the option of sharing the sensors' raw data and processing it locally, e.g., at some on-board server. When equipped with suitable intelligence capabilities each sensor can successfully identify and extract those features of its measurement data that are not captured by other sensors' data. Then, it only needs to communicate those, not its entire data. 
		
		\begin{figure}[htpb]
			\centering
			\includegraphics[width=0.6\linewidth]{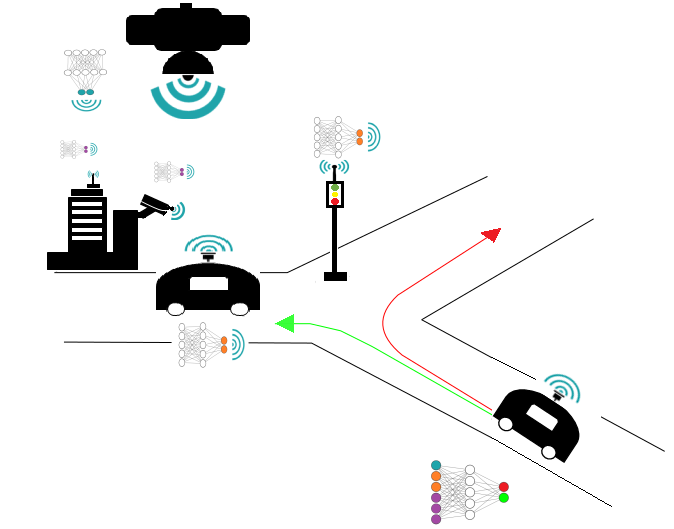}
			\caption{Fusion of inference from on-board and external sensors for automatic vehicle navigation.}
			\label{fig-autonomous-driving}
		\end{figure}
	\end{example}
	
	\begin{example}{(Public Health)}
		One of the early applications of machine learning is in the area of medical imaging and public health. In this context, various institutions can hold different modalities of patient data in the form of electronic health records, pathology test results, radiology, and other sensitive imaging data such as genetic markers for disease. Correct diagnosis may be contingent on being able to using all relevant data from all institutions. However, these institutions may not be authorized to share their raw data. Thus, it is desired to train distributively machine learning models without sharing the patient's raw data in order to prevent illegal, un-ethic or un-authorized usage of it~\cite{national2003nih}. Local hospitals or tele-health screening centers seldom acquire enough diagnostic images on their own; and collaborative distributed learning in this setting would enable each individual center to contribute data to an aggregate model without sharing any raw data. 
	\end{example}

	\subsection{Contributions}
	
	In this paper, we study the aforementioned network inference problem in which the network is modeled as a weighted acyclic graph and inference about a random variable is performed on the basis of summary information obtained from possibly correlated variables at a subset of the nodes. Following an information-theoretic approach in which we measure discrepancies between true values and their estimated fits using average logarithmic loss, we first develop a bound on the best achievable accuracy given the network communication constraints. Then, considering a supervised setting in which nodes are equipped with NNs and their mappings need to be learned from distributively available training data-sets, we propose a distributed learning and inference architecture; and we show that it can be optimized using a distributed version of the well known stochastic gradient descent (SGD) algorithm that we develop here. The resulting distributed architecture and algorithm, which we herein name ``in-network (INL) learning", generalize those introduced in~\cite{D-IB-discrete_gaussian} (see also~\cite{aguerri2019distributed,IB-problems}) for a specific case, multiaccess type, network topology. We investigate in more detail what the various nodes need to exchange during both the training and inference phases, as well as associated requirements in bandwidth. Finally, we provide a comparative study with (an adaptation of) FL and the SL of~\cite{gupta2018distributed} and experiments that illustrate our results.
	
	\subsection{Outline and Notation}
	
	In Section~\ref{prob_formulation} we describe the studied network inference problem formally. In Section~\ref{propsol} we present our in-network inference architecture, as well a distributed algorithm to training it distributively. Section~\ref{section:exp} contains a comparative study with FL and SL in terms of bandwidth requirements; as well as some experimental results.
	
	Throughout the paper the following notation will be used. Upper case letters denote random variables,e.g. $X$; lower case letters denote realizations of random variables, e.g $x$, and calligraphic letters denote sets, e.g., $\mathcal{X}$. The cardinality of a set is denoted by $|\mathcal{X}|$. For a random variable $X$ with probability mass function $P_X$, the shorthand $p(x)=P_X(x), x\in X$ is used. Boldface letters denote matrices or vectors, e.g., $\bm{X}$ or $\bm{x}$.
	\iffalse
	%removed as the notaions is not used anywhere in the document
	A sequence of random variables $(X_k,X_{k+1},...,X_j)$ is abbreviated by $X_k^j$. The sequence $X^n_1 $ is sometimes denoted simply as $X^n$, i.e., $X^{n}\triangleq(X_1,X_2,...,X_n)$.
	\fi For random variables $(X_1,X_2,...)$ and a set of integers $\mathcal{K} \subseteq \mathrm{N}$, the notation $X_{\mathcal{K}}$ designates the vector of random variables with indices in the set $\mc K$, i.e., $X_{\mathcal{K}} \triangleq \{X_k:k\in \mathcal{K}\}$. If $\mathcal{K}=\emptyset$ then $X_\mathcal{K}=\emptyset$.  Also, for zero-mean random vectors $\bm{x}$ and $\bm{y}$, the quantities $\bm{\sum_x}$, $\bm{\sum_{x,y}}$ and $\bm{\sum_{x|y}}$ denote, respectively, the covariance matrix of the vector $\bm{x}$, the covariance matrix of vector $(\bm{x},\bm{y})$ and the conditional covariance of $\bm{x}$ given $\bm{y}$. Finally, for two probability measures $P_X$ and $Q_X$ over the same alphabet $\mathcal{X}$, the relative entropy or Kullback-Leibler divergence is denoted as $D_{KL}(P_X || Q_X)$. That is, if $P_X$ is absolutely continuous with respect to $Q_X$, then $D_{KL}(P_X || Q_X)=\mathbb{E}_{P_X}[\log(P_X(X)/Q_X(X))]$, otherwise $D_{KL}(P_X || Q_X)=\infty$.

	\section{Network Inference: Problem Formulation}~\label{prob_formulation}
	
	\begin{figure}[!htpb]	
		\centering
		\includegraphics[width=1\linewidth]{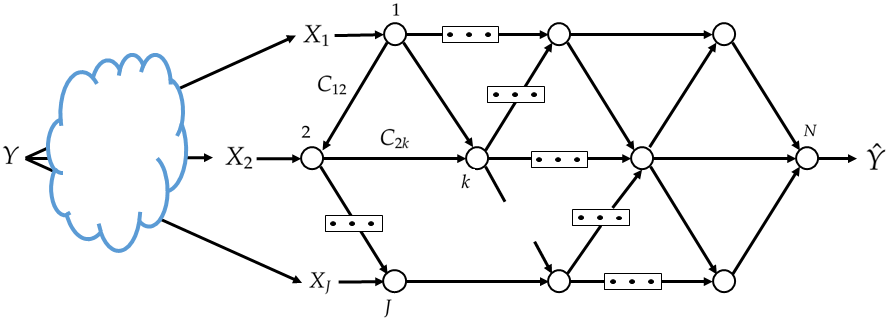}
		\caption{Studied network inference model.}
		\label{fig-network-model}
	\end{figure}
	
	Consider an $N$ node distributed network. Of these $N$ nodes, $J \geq 1$ nodes possess or can acquire data that is relevant for inference on a random variable (r.v.) of interest $Y$, with alphabet $\mc Y$. Let $\mc J =\{1, \hdots,J\}$ denote the set of such nodes, with node $j\in \mc J$ observing samples from the random variable $X_j$, with alphabet $\mc X_j$. The relationship between the r.v. of interest $Y$ and the observed ones, $X_1, \dots, X_J$, is given by the joint probability mass function\footnote{For simplicity we assume that random variables are discreet, however our technique can be applied to continuous variables as well.} $P_{X_{\mc J},Y}:=P_{X_1,\dots,X_J,Y}(x_1,\dots x_J,y)$, with $(x_1,\dots, x_j) \in \mc X_1 \times \dots \times \mc X_J$ and $y \in \mc Y$. Inference on $Y$ needs to be done at some node $N$ which is connected to the nodes that possess the relevant data through a number of intermediate other nodes. It has to be performed without any sharing of raw data. The network is modeled as a weighted directed acyclic graph; and may represent, for example, a wired network or a wireless mesh network operated in time or frequency division, where the nodes may be servers, handsets, sensors, base stations or routers. We assume that the network graph is fixed and known. The edges in the graph represent point-to-point communication links that use channel coding to achieve close to error-free communication at rates below their respective capacities. For a given loss function $\ell(\cdot,\cdot)$ that measures discrepancies between true values of $Y$ and their estimated fits, what is the best precision for the estimation of $Y$ ? Clearly, discarding any of the relevant data $X_j$ can only lead to a reduced precision. Thus, intuitively features that collectively maximize information about $Y$ need to be extracted distributively by the nodes from the set $\mc J$, without explicit coordination between them; and they then need to propagate and combine appropriately at the node $N$. How should that be performed optimally without sharing raw data ? In particular, how should each node process information from the incoming edges (if any) and what should it transmit on every one of its outgoing edges ? Furthermore, how should the information be fused optimally at Node $N$ ?
	
	\noindent More formally, we model an $N$-node network by a directed acyclic graph $\mathcal{G=(N,E, C)}$, where $\mc N=[1:N]$ is the set of nodes, $\mc E \subset \mc N \times \mc N$ is the set of edges and $\mc C=\{C_{jk}\: :\: (j,k) \in \mc E\}$ is the set of edge weights. Each node represents a device and each edge represents a noiseless communication link with capacity $C_{jk}$. The processing at the nodes of the set $\mc J$ is such that each of them assigns an index $m_{jl} \in [1,M_{jl}]$ to each $x_j \in \mc X_j$ and each received index tuple $(m_{ij}\: :\: (i,j) \in \mc E)$, for each edge $(j,l) \in \mc E$. Specifically, let for $j \in \mc J$ and $l$ such that $(j,l) \in \mc E$, the set $\mc M_{jl} = [1:M_{jl}]$. The encoding function at node $j$ is 
	\begin{equation}
		\omega_j: \mathcal{X}_j \times \left\{\: \varprod_{i\: :\: (i,j) \: \in \: \mc E} \mc M_{ij}\: \right\} \longrightarrow \varprod_{l\: :\: (j,l) \: \in \: \mc E} \mc M_{jl},
		\label{encoding-function-sender-node-j}
	\end{equation}
	where $\varprod$ designates the Cartesian product of sets. Similarly, for $k \in [1:N-1]/{\mc J}$, node $k$ assigns an index $m_{kl} \in [1,M_{kl}]$ to each index tuple $(m_{ik}\: :\: (i,k) \in \mc E)$ for each edge $(k,l) \in \mc E$. That is,
	\begin{equation}
		\omega_k: \varprod_{i\: :\: (i,k) \: \in \: \mc E} \mc M_{ik} \longrightarrow \varprod_{l\: :\: (k,l) \: \in \: \mc E} \mc M_{kl}.
		\label{encoding-function-relay-node-k}
	\end{equation}
	\noindent The range of the encoding functions $\{\omega_i\}$ are restricted in size, as
	\begin{equation}
		\log |\mc M_{ij}| \leq C_{ij} \quad \forall i \: \in \: [1,N-1] \quad \text{and} \quad \forall j\: :\: (i,j) \in \mc E.
		\label{constraint-fronthaul-capacity-links}
	\end{equation}
	Node $N$ needs to infer on the random variable $Y \in \mc Y$ using all incoming messages, i.e.,
	\begin{equation}
		\psi: \varprod_{i\: :\: (i,N) \: \in \: \mc E} \mc M_{iN} \longrightarrow \hat{\mc Y}.
		\label{decoding-function-final-decision-node}
	\end{equation}
	In this paper, we choose the reconstruction set $\hat{\mc Y}$ to be the set of distributions on $\mc Y$, i.e., $\hat{\mc Y} = \mc P(\mc Y)$; and we measure discrepancies between true values of $Y \in \mc Y$ and their estimated fits in terms of average logarithmic loss, i.e., for $(y, \hat{P}) \in \mc Y \times \mc P(\mc Y)$
	\begin{equation}
		d(y, \hat{P}) = \log \frac{1}{\hat{P}(y)}.
		\label{definition-log-loss-distorsion-measure}
	\end{equation}
	As such, the performance of a distributed inference scheme $\left( (\omega_j)_{j \in \mc J}, (\omega_k)_{k \in [1,N-1] \\ / \mc J}, \psi \right)$ for which~\eqref{constraint-fronthaul-capacity-links} is fulfilled is given by its achievable \textit{relevance} given by
	\begin{equation}
		\Delta = H(Y) - \mathbb{E}\left[d(Y,\hat{Y})\right],
		\label{relevance-measure-at-end-decision-node}
	\end{equation}
	which, for a discrete set $\mc Y$, is directly related to the error of misclassifying the variable $Y \in \mc Y$.
	
	\noindent In practice, in a supervised setting, the mappings given by~\eqref{encoding-function-sender-node-j},~\eqref{encoding-function-relay-node-k} and~\eqref{decoding-function-final-decision-node} need to be learned from a set of training data samples $\{(x_{1,i}, \hdots, x_{J,i},y_i)\}_{i=1}^n$. The data is distributed such that the samples $\dv x_j :=(x_{j,1}, \hdots, x_{j,n})$ are available at node $j$ for $j \in \mc J$ and the desired predictions $\dv y :=(y_1, \hdots, y_n)$ are available at the end decision node $N$. We parametrize the possibly stochastic mappings~\eqref{encoding-function-sender-node-j},~\eqref{encoding-function-relay-node-k} and~\eqref{decoding-function-final-decision-node} using NNs. This is depicted in Figure~\ref{fig-in-network-learning-and-inference-general-model}. We denote the parameters of the NNs that parameterize the encoding function at each node $i\in [1:(N-1)]$ with $\boldsymbol{\theta}_i$ and the parameters of the NN that parameterizes the decoding function at node $N$ with $\boldsymbol{\phi}$. Let $\boldsymbol{\theta}=[\boldsymbol{\theta}_1, \dots, \boldsymbol{\theta}_{N-1}]$, we aim to find the parameters $\boldsymbol{\theta}, \boldsymbol{\phi}$ that maximize the relevance of the network, given the network constraints of \eqref{constraint-fronthaul-capacity-links}. Given that the actual distribution is unknown and we only have access to a dataset, the loss function needs to strike a balance between its performance on the dataset, given by empirical estimate of the relevance, and the network’s ability to perform well on samples outside the dataset.

	The NNs at the various nodes are arbitrary and can be chosen independently – for instance, they need \textit{not} be identical as in FL. It is only required that the following mild condition which, as will become clearer from what follows, facilitates the back-propagation be met. Specifically, for every $j \in \mc J$ and $x_{j}\in \mc X_j$\footnote{We assume all the elements of $\mc X_j$ have the same dimension.} it holds that
	\begin{align}
		&\text{Size of first layer of NN (j)}= \nonumber\\
		&\text{Dimension}\: ( x_j) + \sum_{i\: :\: (i,j) \: \in \: \mc E}  (\text{Size of last layer of NN $(i)$}).
		\label{condition1-concatenation-activations-vectors}
	\end{align}
	Similarly, for $k \in [1:N]/{\mc J}$ we have
	\begin{align}
		&\text{Size of first layer of NN (k)} = \nonumber\\
		&\:\:\sum_{i\: :\: (i,k) \: \in \: \mc E}  (\text{Size of last layer of NN $(i)$}).
		\label{condition2-concatenation-activations-vectors}
	\end{align}
	\begin{remark}
		Conditions \eqref{condition1-concatenation-activations-vectors} and \eqref{condition2-concatenation-activations-vectors} were imposed only for the sake of ease of implementation of the training algorithm; the techniques present in this paper, including optimal trade-offs between relevance and complexity for the given topology, the associated loss function, the variational lower bound, how to parameterize it using NNs and so on, do not require \eqref{condition1-concatenation-activations-vectors} and \eqref{condition2-concatenation-activations-vectors} to hold.
	\end{remark}
	\begin{figure}[!htpb]
		\begin{subfigure}[b]{1\linewidth}
			\centering
			\includegraphics[width=0.9\linewidth]{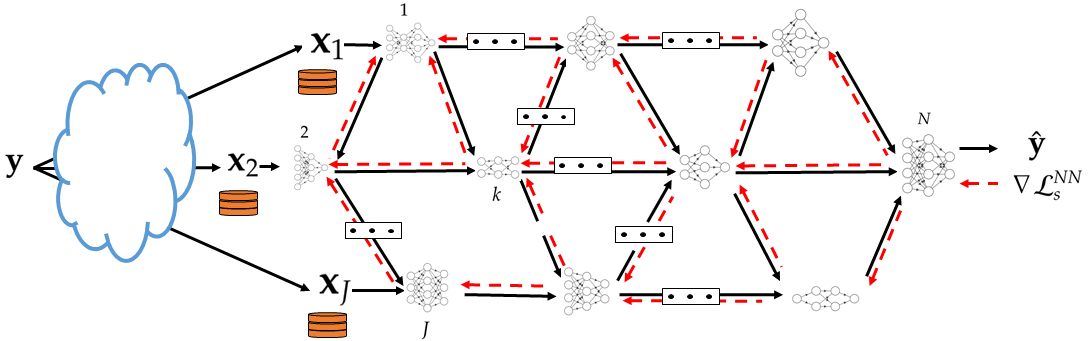}
			\caption{Training phase}
			\label{fig:prb_setting_train}
		\end{subfigure}
		\begin{subfigure}[b]{1\linewidth}
			\centering
			\includegraphics[width=0.9\linewidth]{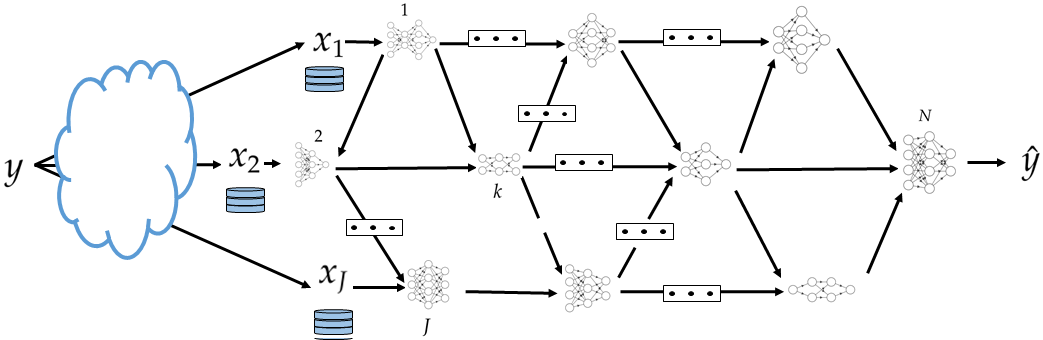}
			\caption{Inference phase}
			\label{fig:prb_setting_inf}
		\end{subfigure}
		\caption{In-network learning and inference using neural networks}
		\label{fig-in-network-learning-and-inference-general-model}
	\end{figure}

	%-----------
	
	\section{Proposed Solution: In-Network Learning and Inference}\label{propsol}
	
	For convenience, we first consider a specific setting of the model of network inference problem of Figure~\ref{fig-in-network-learning-and-inference-general-model} in which $J=N-1$ and all the nodes that observe data are only connected to the end decision node, but not among them.

	\subsection{A Specific Model: Fusing of Inference}~\label{wohops}
	
	In this case, a possible suitable loss function was shown by~\cite{aguerri2019distributed} to be:
	\begin{align}
		&\mc L^{\text{NN}}_s(n) = \frac{1}{n} \sum_{i=1}^n  \log Q_{\phi_{\mc J}}(y_i|u_{1,i}, \hdots, u_{J,i})\nonumber\\ &\quad +\frac{s}{n} \sum_{i=1}^n  \sum_{j=1}^J \left( \log Q_{\phi_j}(y_i|u_{j,i}) - \log \left(\frac{P_{\theta_j}(u_{j,i}|x_{j,i})}{Q_{\varphi_j}(u_{j,i})}\right) \right),
		\label{loss function}
	\end{align}
	where $s$ is a Lagrange parameter and for $j \in \mc J$ the distributions $P_{\boldsymbol{\theta_j}}(u_j|x_j)$, $Q_{\boldsymbol{\phi_j}} (y|u_j)$, $Q_{\boldsymbol{\phi_{\mc J}}}(y|u_{\mc J})$ are variational ones whose parameters are determined by the chosen NNs using the re-parametrization trick of~\cite{kingma2013auto}; and $Q_{\boldsymbol{\varphi_j}}(u_{j})$ are priors known to the encoders. For example, denoting by $f_{\theta_j}$ the NN used at node $j \in \mc J$ whose (weight and bias) parameters are given by $\boldsymbol{\theta}_j$, for regression problems the conditional distribution $P_{\boldsymbol{\theta_j}}(u_j|x_j)$ can be chosen to be multivariate Gaussian, i.e., $P_{\boldsymbol{\theta_j}}(u_j|x_j) = \mc {N} (u_j; \boldsymbol{\mu}_j^{\theta}, \dv \Sigma_j^{\theta})$. For discrete data, concrete variables (i.e., Gumbel-Softmax) can be used instead. 
	
	% {Similarly, for decoders $Q_{Y|U_k}$ over $\mc Y$ for each element on $\mc U_k$, and the decoding distribution $Q_{Y|U_{\mc K}}$ over $\mc Y$ for each element in $ \mc U_1\times \cdots \times \mc U_K$, let $Q_{\psi_k}(y|u_k)$, $Q_{\psi_{\mc K}}(y|u_{\mc K})$ denote the families of distributions parametrized by the output of the DNNs $f_{\psi_k}, f_{\psi_{\mc K}}$, respectively. Finally, for the prior distributions $Q_{U_k}(u_k)$ over $\mc U_k$ we define the family of distributions $Q_{\varphi_k}(u_k)$, which do not depend on a DNN.}
	
	\noindent The rationale behind the choice of loss function~\eqref{loss function} is that in the regime of large $n$, if the encoders and decoder are not restricted to use NNs under some conditions~\footnote{The optimality is proved therein under the assumption that for every subset $\mc S \subseteq \mc J$ it holds that $X_{\mc S} \mkv Y \mkv X_{{\mc S}^c}$. The RHS of~\eqref{optimal loss function} is achievable for arbitrary distributions, however, regardless of such an assumption.} the optimal stochastic mappings $P_{U_j|X_j}$, $P_U$, $P_{Y|U_j}$ and $P_{Y|U_{\mc J}}$ are found by marginalizing the joint distribution that maximizes the following Lagrange cost function~\cite[Proposition 2]{aguerri2019distributed}
	\begin{equation}
		\mathcal L^{\mathrm{optimal}}_s = - H(Y|U_{\mc J}) - s \sum_{j=1}^J \Big[H(Y|U_j)+I(U_j;X_j)\Big].
		\label{optimal loss function}
	\end{equation}
	where the maximization is over all joint distributions of the form $P_{Y}\prod_{j=1}^JP_{X_j|Y}\prod_{j=1}^J P_{U_j|X_j}$.

	\subsubsection{Training Phase}
	During the forward pass, every node $j \in \mc J$ processes mini-batches of size, say, $b_j$ of its training data-set $\dv x_j$. Node $j \in \mc J$ then sends a vector whose elements are the activation values of the last layer of (NN $j$). Due to~\eqref{condition2-concatenation-activations-vectors} the activation vectors are concatenated vertically at the input layer of NN (J+1). The forward pass continues on the NN (J+1) until the last layer of the latter. 
	The parameters of NN (J+1) are updated using standard backpropgation. Specifically, let $L_{J+1}$ denote the index of the last layer of NN $(J+1)$. Also, let, 
	$\bm{w}_{J+1}^{[l]}$, $\bm{b}_{J+1}^{[l]}$ and $\bm{a}_{J+1}^{[l]}$ denote respectively the weights, biases and activation values at layer $l \in [2:L_{J+1}]$ for the NN $(J+1)$; and $\sigma$ is the activation function. Node $(J+1)$ computes the error vectors
	\begin{subequations}
		\begin{align}
			\label{back_prop_out_layer}
			& \boldsymbol{\delta}_{J+1}^{[L_{J+1}]} =\nabla_{\bm a_{J+1}^{[L_{J+1}]}}\mc L^{NN}_s(b) \odot \sigma'(\bm{w}_{J+1}^{[L_{J+1}]}\bm{a}_{J+1}^{[L_{(J+1)}-1]}+\bm{b}_{J+1}^{[L_{J+1}]}) \\
			& \boldsymbol{\delta}_{J+1}^{[l]} =[(\bm{w}_{J+1}^{[l+1]})^{T} \boldsymbol{\delta}_{J+1}^{[l+1]}]\odot \sigma'(\bm{w}_{J+1}^{[l]}\bm{a}_{J+1}^{[l-1]}+\bm{b}_{J+1}^{[l]})\:\:\: \forall \:\: l \in [2,L_{J+1}-1],% \forall was initially \text{for}
			\label{back_prop_err}\\
			& \boldsymbol{\delta}_{J+1}^{[1]} =[(\bm{w}_{J+1}^{[2]})^{T} \boldsymbol{\delta}_{J+1}^{[2]}]\label{back_prop_err_first_layer}
		\end{align} 
		\label{equations-backpropagation}
	\end{subequations}
	and then updates its weight- and bias parameters as
	\begin{subequations}
		\begin{align}
			\bm{w}^{[l]}_{J+1} &\rightarrow \bm{w}_{J+1}^{[l]}-\eta \boldsymbol{\delta}_{J+1}^{[l]}(\bm{a}_{J+1}^{[l-1]})^T,\label{back_prop_bias}\\
			\bm{b}^{[l]}_{J+1}&\rightarrow \bm{b}_{J+1}^{[l]}-\eta \boldsymbol{\delta}_{J+1}^{[l]},\label{back_prop_weight}
		\end{align}
		\label{equations-parameters-update}
	\end{subequations}
	where $\eta$ designates the learning parameter~\footnote{For simplicity $\eta$ and $\sigma$ are assumed here to be identical for all NNs.}.
	
	\begin{figure}[!hbpt]
		\centering
		\includegraphics[width=0.8\linewidth]{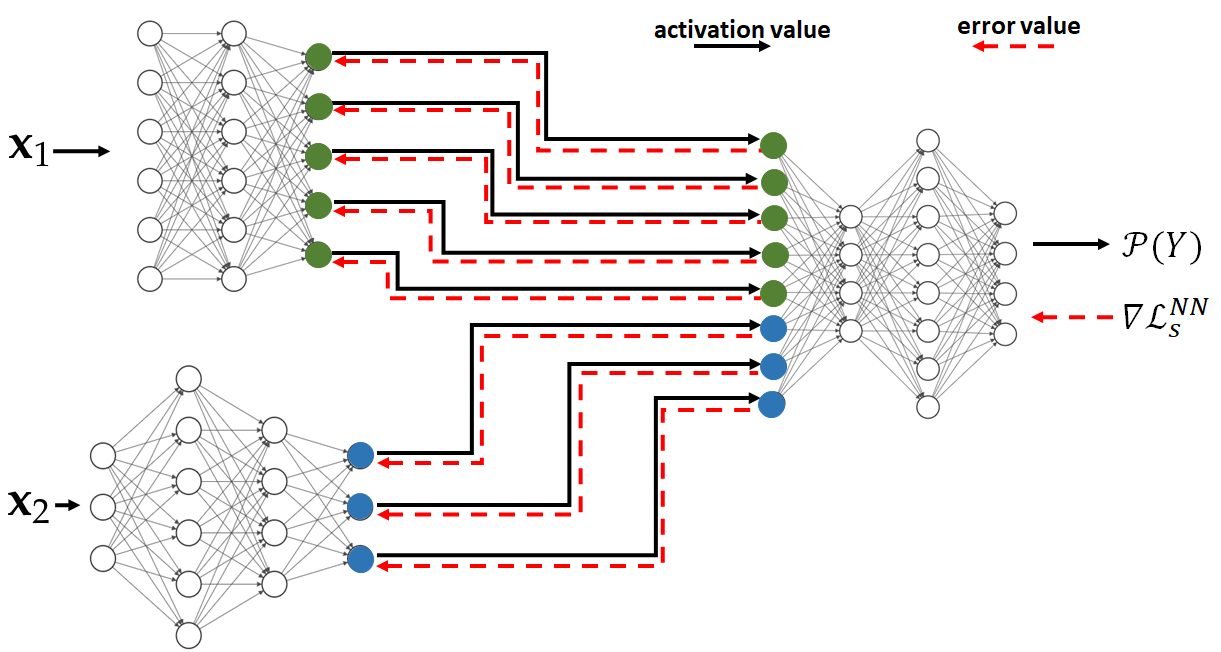}
		\caption{Forward and Backward passes for an example in-network learning with $J=2$.}
		\label{fig_2_node_mapping}
		\vspace{-0.2cm}
	\end{figure}
	
	\begin{remark}
		It is important to note that for the computation of the RHS of~\eqref{back_prop_out_layer} node $(J+1)$, which knows $Q_{\phi_{\mc J}}(y_i|u_{1,i},\hdots,u_{J,i})$ and $Q_{\phi_j}(y_i|u_{j,i})$ for all $i \in [1:n]$ and all $j \in \mc J$, only the derivative of $\mc L^{\text{NN}}_{s}(n)$ w.r.t. the activation vector $\dv a^{L_{J+1}}_{J+1}$ is required. For instance, node $(J+1)$ does not need to know any of the conditional variationals $P_{\boldsymbol{\theta_j}}(u_j|x_j)$ or the priors $Q_{\boldsymbol{\varphi_j}}(u_j)$.
	\end{remark}
	The backward propagation of the error vector from node $(J+1)$ to the nodes $j$, $j=1,\hdots,J$, is as follows. Node $(J+1)$ splits horizontally the error vector of its input layer into $J$ sub-vectors with sub-error vector $j$ having the same size as the dimension of the last layer of NN $j$ [recall~\eqref{condition2-concatenation-activations-vectors} and that the activation vectors are concatenated vertically during the forward pass]. See Figure~\ref{fig_2_node_mapping}. The backward propagation then continues on each of the $J$ input NNs simultaneously, each of them essentially applying operations similar to~\eqref{equations-backpropagation} and~\eqref{equations-parameters-update}. 
	
	\begin{remark}
		Let $\boldsymbol{\delta}_{J+1}^{[1]}(j)$ denote the sub-error vector sent back from node $(J+1)$ to node $j \in \mc J$. It is easy to see that, for every $j \in \mc J$, 
		\begin{equation}
			\nabla_{\bm a^{L_j}_{j}}\mc L^{NN}_s(b_j)=\boldsymbol{\delta}_{J+1}^{[1]}(j)-s\nabla_{\bm a^{L_j}_{j}}\left(\sum_{i=1}^b \log \left(\frac{P_{\boldsymbol{\theta_j}}(u_{j,i}|x_{j,i})}{Q_{\boldsymbol{\varphi_j}}(u_{j,i})}\right)\right);
		\end{equation}
		and this explains why node $j \in \mc J$ needs only the part $\boldsymbol{\delta}_{J+1}^{[1]}(j)$, not the entire error vector at node $(J+1)$.
	\end{remark}
	\begin{figure}[!htpb]
		\begin{subfigure}[b]{1\linewidth}
			\centering
			\includegraphics[width=0.9\linewidth]{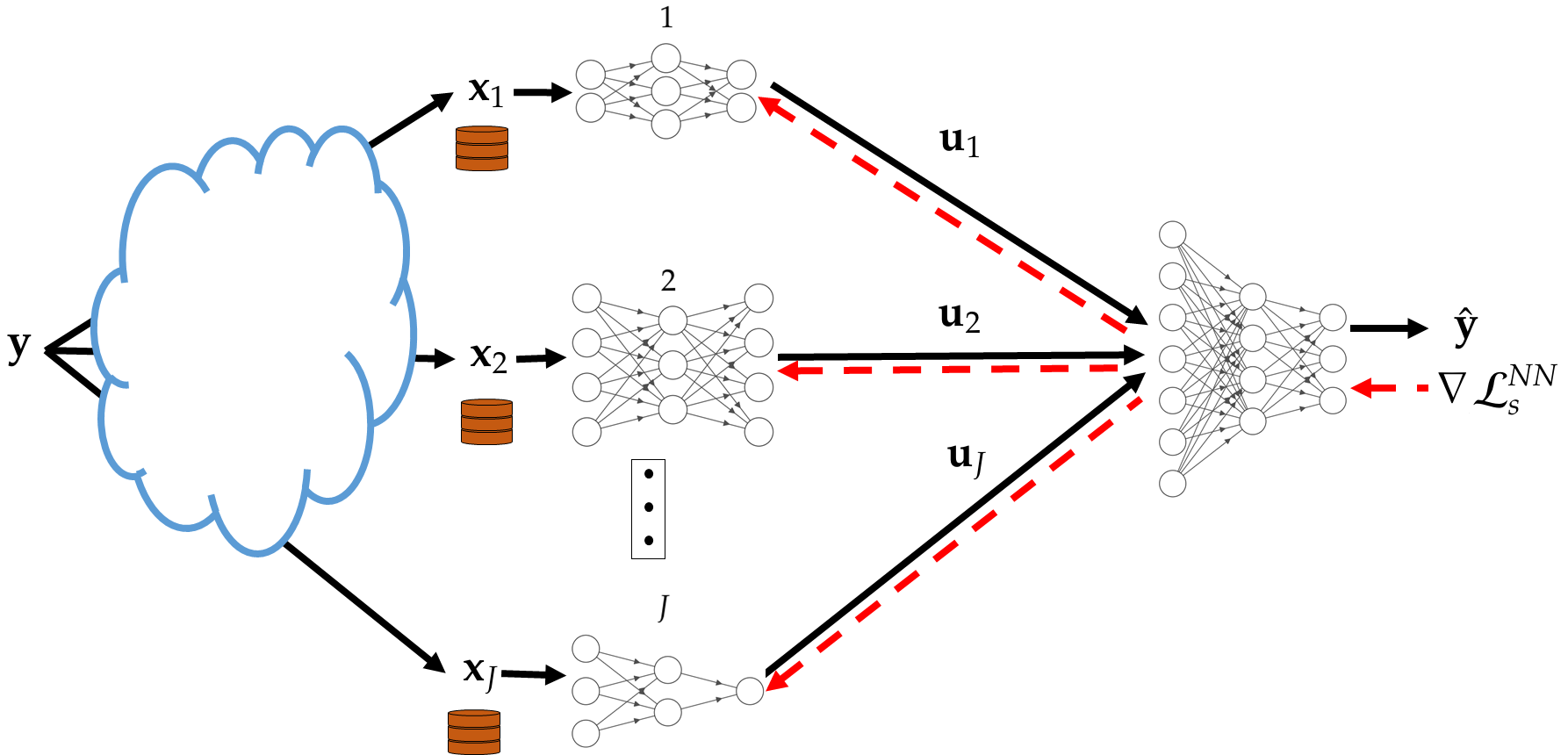}
			\caption{Training phase}
			\label{fig:prb_setting_train}
		\end{subfigure}
		\begin{subfigure}[b]{1\linewidth}
			\centering
			\includegraphics[width=0.9\linewidth]{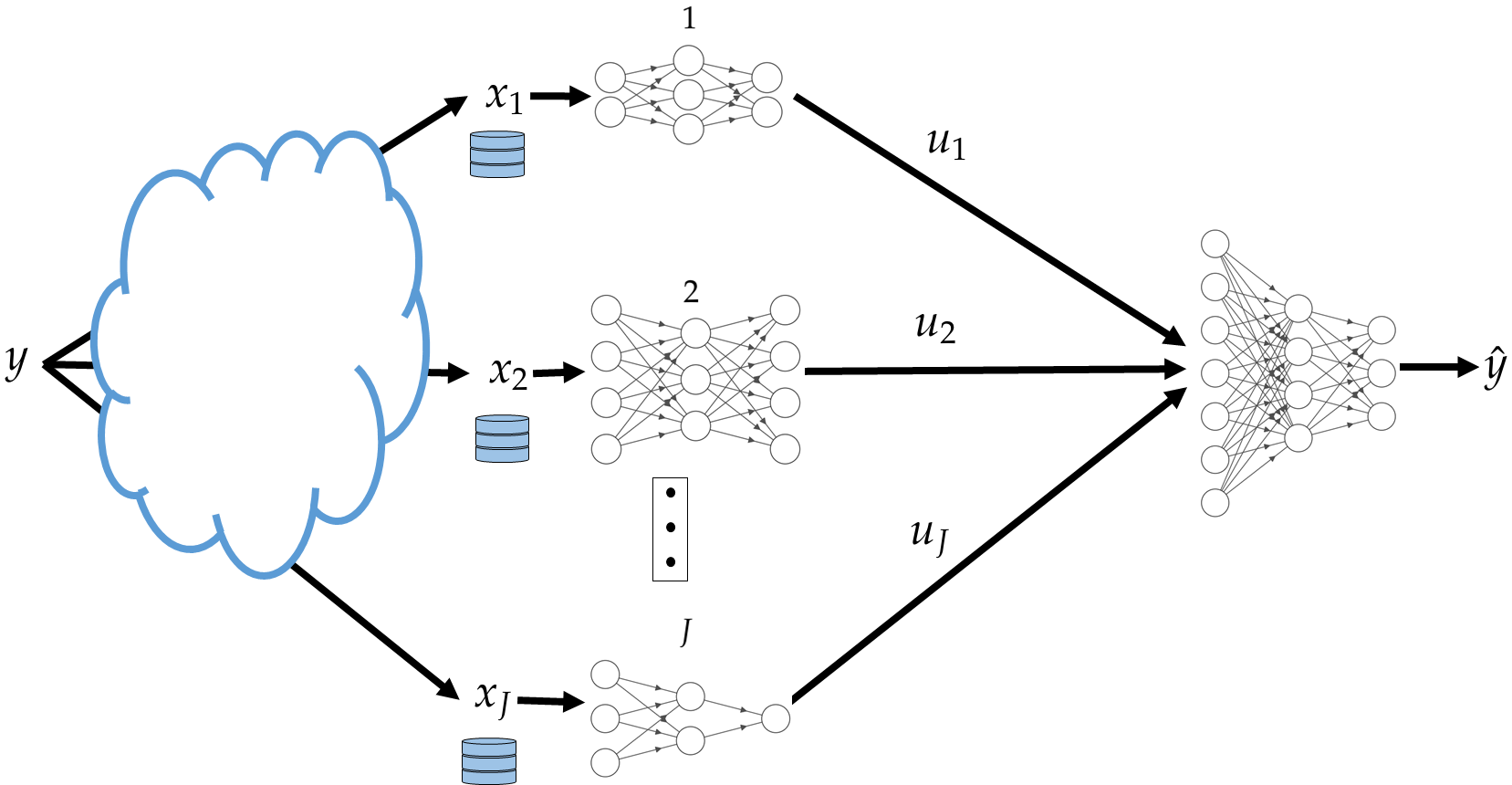}
			\caption{Inference phase}
			\label{fig:prb_setting_inf}
		\end{subfigure}
		\caption{In-network learning for the network model for the case without hops}
		\label{fig:multi_view_learning}
	\end{figure}
	\subsubsection{Inference Phase}
	
	During this phase node $j$ observes a new sample $x_j$. It uses its NN to output an encoded value $u_j$ which it sends to the decoder. After collecting $(u_1, \cdots, u_J)$ from all input NNs, node $(J+1)$ uses its NN to output an estimate of $Y$ in the form of soft output $Q_{\phi_{\mc J}}(Y|u_1,\hdots, u_J)$. The procedure is depicted in Figure~\ref{fig:prb_setting_inf}.
	
	\begin{remark}
		A suitable practical implementation in wireless settings can be obtained using Orthogonal Frequency Division Multiplexing (OFDM). That is, the $J$ input nodes are allocated non-overlapping bandwidth segments and the output layers of the corresponding NNs are chosen accordingly. The encoding of the activation values can be done, e.g., using entropy type coding~\cite{FHH-L21}. 
	\end{remark}
	
	\subsection{General Model: Fusion and Propagation of Inference}~\label{whops}
	
	Consider now the general network inference model of Figure~\ref{fig-network-model}. Part of the difficulty of this problem is in finding a suitable loss function and that can be optimized distributively via NNs that only have access to local data-sets each. The next theorem provides a bound on the relevance achievable (under some assumptions~\footnote{The inference problem is a one-shot problem. The result of Theorem~\ref{thorem_achive} is asymptotic in the size of the training data-sets. One-shot results for this problem can be obtained, e.g., along the approach of~\cite{CG18}. }) for an arbitrary network topology $(\mc E,\mc N)$. For convenience, we define for $\mc S \subseteq [1,\hdots,N-1]$ and non-negative $(C_{ij} \: : \: (i,j) \in \mc E)$ the quantity
	\begin{equation}
		C(\mc S) =\sum_{(i,j) \: : \:  i \in \mc S, j \in \mc S^c} C_{ij}.
		\label{definition-cut-set}
	\end{equation}

	\begin{theorem}
		For the network inference model of Figure~\ref{fig-network-model}, in the regime of large data-sets the following relevance is achievable, 
		\begin{equation}
			\Delta = \max I(U_1,\hdots,U_J;Y)
			\label{delta_bound}
		\end{equation}
		where the maximization is over joint measures of the form 
		\begin{equation}
			P_{Q}P_{X_1,\hdots,X_J,Y} \prod_{j=1}^J P_{U_j|X_j,Q}
			\label{joint-measure-statement-theorem1}
		\end{equation}
		for which there exist non-negative $R_1,\hdots,R_J$ that satisfy
		\begin{align*}
			& \sum_{j \in \mc S} R_j \geq I(U_{\mc S};X_{\mc S}|U_{\mc S ^c},Q), \quad \text{for all}\quad \mc S \subseteq \mc J \\
			& \sum_{j \in \mc S \cap \mc J} R_j \leq C(\mc S) \quad \text{for all}\quad \mc S \subseteq [1:N-1] \quad  \text{with} \quad \mc S \cap \mc J \neq \emptyset.
		\end{align*}
		\label{thorem_achive}
	\end{theorem}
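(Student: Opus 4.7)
The plan is to decouple the problem into two sub-problems that can be handled with standard tools and then stitch them together with a routing/network-coding argument. At the source side, each data-observing node $j\in\mc J$ performs a Berger--Tung type distributed source code producing a description $U_j$ at rate $R_j$; at the network side, the resulting nominal rate tuple $(R_1,\dots,R_J)$ is transported through the DAG from the sources to the sink via coded forwarding, subject to the edge-capacity constraints. The time-sharing variable $Q$ will be handled as usual by a common randomness construction known to all nodes. Log-loss and the choice $\hat{\mc Y}=\mc P(\mc Y)$ will, by the well-known identity $\min_{\hat P}\mathbb E[\log(1/\hat P(Y))\mid U_{\mc J}]=H(Y\mid U_{\mc J})$, make the achievable relevance equal to $H(Y)-H(Y\mid U_{\mc J})=I(U_{\mc J};Y)$, giving the claimed expression.

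\noindent\textbf{Source-coding layer.} Fix a joint distribution of the form~\eqref{joint-measure-statement-theorem1} and rates $R_1,\ldots,R_J$ satisfying the first inequality. I would use a standard Berger--Tung inner-bound construction: each node $j$ generates an i.i.d.\ codebook of $2^{nR_j}$ $U_j$-sequences according to $P_{U_j|Q}$, and given its observation $\dv x_j$ finds an index $m_j$ such that $(\dv q, \dv x_j, \dv u_j(m_j))$ is jointly typical. A standard analysis (covering lemma) shows this succeeds with high probability. Provided the sink obtains all $J$ indices and the Markov structure $U_j-\!\!\!\!\minuso\!\!\!\!- X_j-\!\!\!\!\minuso\!\!\!\!-(X_{\mc J\setminus j},U_{\mc J\setminus j})$ holds, the Berger--Tung/mutual-covering argument (equivalently, the distributed IB rate region of \cite{aguerri2019distributed}) guarantees joint typicality recovery of $U_{\mc J}^n$ at the sink whenever $\sum_{j\in\mc S}R_j\geq I(U_{\mc S};X_{\mc S}\mid U_{\mc S^c},Q)$ for every $\mc S\subseteq\mc J$. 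From the recovered $U_{\mc J}^n$ the sink outputs the soft estimate $\hat P(y)=P_{Y|U_{\mc J}}(y\mid u_{\mc J})$, and a standard typicality/continuity argument gives empirical log-loss $\leq H(Y\mid U_{\mc J})+\varepsilon$, hence relevance arbitrarily close to $I(U_{\mc J};Y)$.

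\noindent\textbf{Network-coding layer.} It remains to deliver the $J$ independent index streams $m_1,\dots,m_J$ from their respective source nodes to the sink $N$ through the weighted DAG $(\mc N,\mc E,\mc C)$, using each edge $(i,j)$ no more than $C_{ij}$ bits per channel use. This is a multi-source, single-sink transport problem with independent messages, and the achievable region is precisely the cut-set polytope $\{(R_1,\dots,R_J):\sum_{j\in\mc S\cap\mc J}R_j\leq C(\mc S)\text{ for every }\mc S\subseteq[1:N-1]\text{ with }\mc S\cap\mc J\neq\emptyset\}$. For a single sink, achievability follows either from Edmonds' arborescence packing theorem (a direct routing/flow argument for DAGs) or, more generally, from the single-session linear network coding theorem of Ahlswede--Cai--Li--Yeung; in either case no intermediate node needs knowledge of the semantics of the indices. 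I would apply this off-the-shelf to forward the $m_j$'s reliably to node $N$, after which the sink runs the source-coding decoder as above.

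\noindent\textbf{Stitching and main obstacle.} The two layers are combined by choosing any rate tuple $(R_1,\dots,R_J)$ that simultaneously lies in the Berger--Tung region and in the cut-set polytope: the theorem's constraints are exactly the conjunction of the two. Standard block-Markov / nested codebook arguments show the probability of error vanishes and the expected log-loss converges to $H(Y\mid U_{\mc J})$; a time-sharing argument over $Q$ convexifies the region. The main obstacle I anticipate is not in either layer separately but in justifying cleanly that the two can be combined without loss: specifically, one must argue that the nominal rates delivered by the network layer suffice for the source layer's typicality decoder, which requires the source indices to be treated as independent messages and the link errors (from finite blocklength network coding) to be driven to zero fast enough that they do not corrupt the joint-typicality decoding. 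This is standard but needs to be spelled out carefully, together with the Markov-chain verification needed to apply the Berger--Tung inner bound under the stated joint distribution~\eqref{joint-measure-statement-theorem1}.
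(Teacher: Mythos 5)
Your proposal follows essentially the same route as the paper's proof: Berger--Tung distributed compression of the observations into representations $U_1,\dots,U_J$, transmission of the resulting indices as independent messages over the DAG via linear network coding (which yields exactly the cut-set constraints $\sum_{j\in\mc S\cap\mc J}R_j\le C(\mc S)$), and soft estimation under log-loss giving relevance $I(U_{\mc J};Y)$. The only slip is in your codebook sketch: node $j$ must generate $2^{n\tilde R_j}$ codewords with $\tilde R_j\ge R_j$ for the covering step to succeed and then forward only the bin index $m_j$ at rate $R_j$ (the quantize-and-bin structure, rather than a single codebook of size $2^{nR_j}$), but since you ultimately invoke the standard Berger--Tung inner bound the conclusion is unaffected.
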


%	\begin{equation}
%	C(\mc S) =\sum_{(i,j) \: : \:  i \in \mc S, j \in \mc S^c} C_{ij}.
%	\label{definition-cut-set}
%	\end{equation}
%	
%	
%	\begin{theorem}
%		For the network inference model of Figure~\ref{fig-network-model}, in the regime of large data-sets the following relevance is achievable, 
%		\begin{equation}
%			\Delta = \max I(U_1,\hdots,U_J;Y)
%			\label{delta_bound}
%		\end{equation}
%		where the maximization is over joint measures of the form 
%		\begin{equation}
%			P_{Q}P_{X_1,\hdots,X_J,Y} \prod_{j=1}^J P_{U_j|X_j,Q}
%			\label{joint-measure-statement-theorem1}
%		\end{equation}
%		for which
%		\begin{align*}
%			&  C(\mc S) \geq I(U_{\mc S\cap \mc J};X_{\mc S\cap \mc J}|U_{\mc S ^c\cap \mc J},Q)\\
%			&\:\:\: \text{for all}\quad \mc S \subseteq [1:N-1] \quad  \text{with} \quad \mc S \cap \mc J \neq \emptyset
%		\end{align*}
%		\label{thorem_achive}
%	\end{theorem}
%	
	\begin{proof}
		The proof of Theorem~\ref{thorem_achive} appears in Appendix \ref{appendix-proof-capacity-relavance-region}. An outline is as follows. The result is achieved using a separate compression-transmission-estimation scheme in which the observations $(\dv x_1,\hdots, \dv x_J)$ are first compressed distributively using Berger-Tung coding~\cite{berger_tung_coding} into representations $(\dv u_1,\hdots,\dv u_J)$; and, then, the bin indices are transmitted as independent messages over the network $\mc G$ using linear-network coding~\cite[Section 15.5]{elgamalkim2011}. The decision node $N$ first recovers the representation codewords $(\dv u_1,\hdots,\dv u_J)$; and, then, produces an estimate of the label $\dv y$. The scheme is illustrated in Figure~\ref{fig-illustration-proof-theorem1}.
	\end{proof}
	
	\begin{figure}[!htpb]
		\centering
		\begin{subfigure}[b]{1\linewidth}
			\centering
			\includegraphics[width=0.8\linewidth]{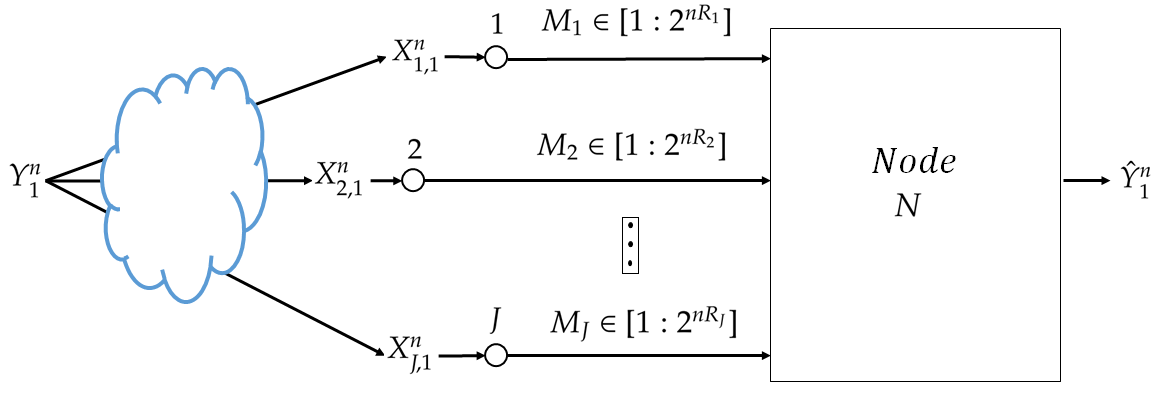}
			\caption{Compression using Berger-Tung coding\newline }
			\label{fig-source_coding}
		\end{subfigure}
		\begin{subfigure}[b]{1\linewidth}
			\centering
			\includegraphics[width=0.8\linewidth]{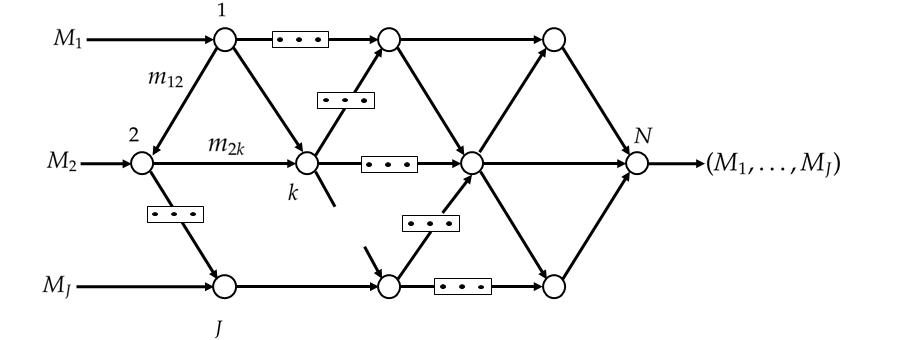}
			\caption{Transmission of the bin indices using linear coding}
			\label{fig-channel_coding}
		\end{subfigure}
		\caption{Block diagram of the separate compression-transmission-estimation scheme of Theorem~\ref{thorem_achive}}
		\label{fig-illustration-proof-theorem1}
	\end{figure}
	
	Part of the utility of the loss function of Theorem~\ref{thorem_achive} is in that it accounts explicitly for the topology of the network for inference fusion and propagation. Also, although as seen from its proof the setting of Theorem~\ref{thorem_achive} assumes knowledge of the joint distribution of the tuple $(X_1,\hdots,X_J,Y)$, the result can be used to train, distributively, NNs from a set of available date-sets. To do so, we first derive a Lagrangian function, from Theorem~\ref{thorem_achive}, which can be used as an objective function to find the desired set of encoders and decoder. Afterwards, we use a variational approximation to avoid the computation of marginal distributions, which can be costly in practice. Finally, we parameterize the distributions suing NNs. For a given network topology in essence, the approach generalizes that of Section~\ref{wohops} to more general networks that involve hops. 
	For simplicity, in what follows, this is illustrated for the example architecture of Figure~\ref{fig:netrelaygraphsimp}. While the example is simple, it showcases the important aspect of any such topology, the fusion of the data at an intermediary nodes, i.e., a hop.

	\begin{figure}[htpb]
		\centering
		\includegraphics[width=1\linewidth]{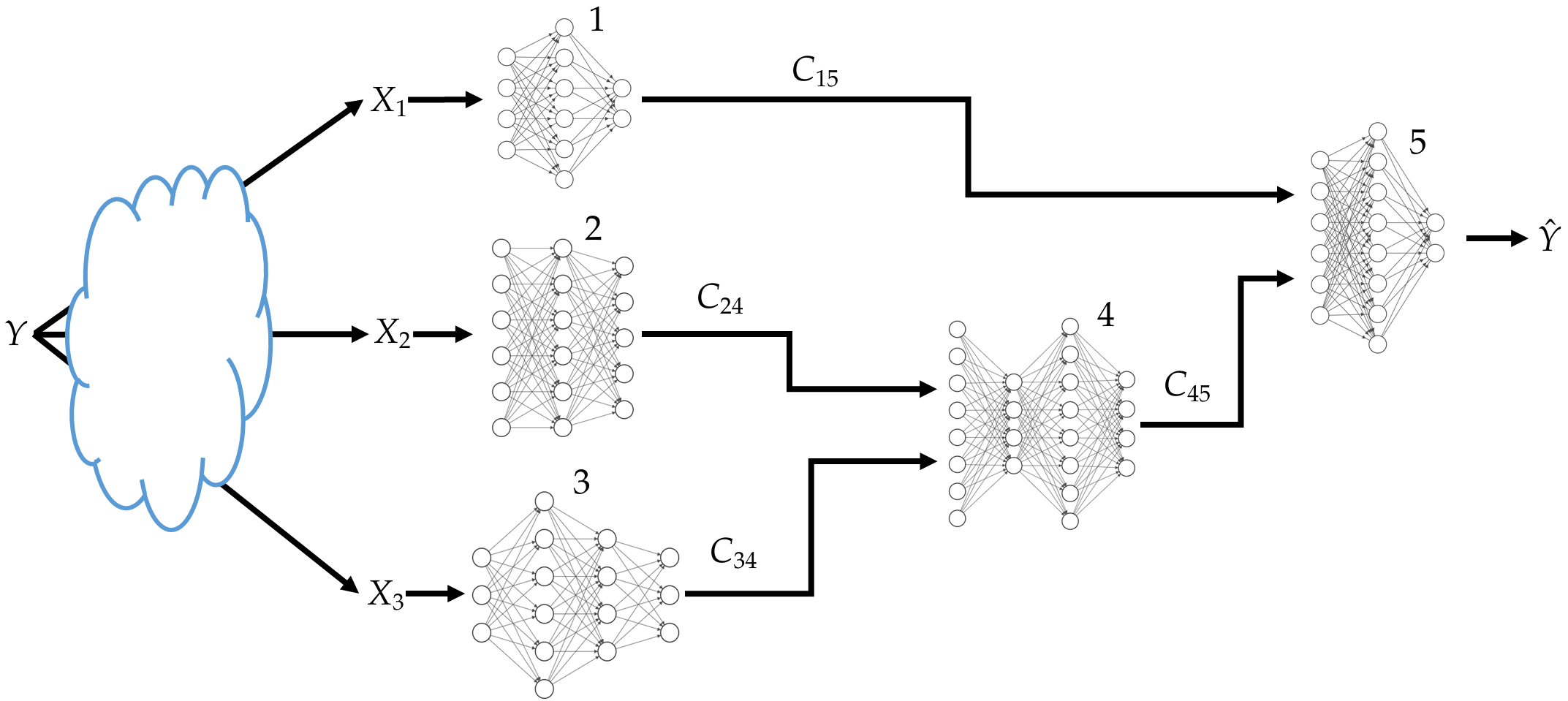}
		\caption{An example in-network learning with inference fusion and propogation}
		\label{fig:netrelaygraphsimp}
	\end{figure}
	
	\noindent Setting $\mc N=\{1,2,3,4,5\}$ and $\mc E=\{(3,4),(2,4),(4,5),(1,5)\}$ in Theorem~\ref{thorem_achive}, we get that 
	\begin{equation}
		\Delta = \max I(U_1,U_2,U_3;Y)
		\label{delta-bound-example-network-with-hops}
	\end{equation}
	where the maximization is over joint measures of the form 
	\begin{equation}
		P_{Q}P_{X_1,X_2,X_3,Y} P_{U_1|X_1,Q} P_{U_2|X_2,Q} P_{U_3|X_3,Q}
	\end{equation}
	for which the following holds for some $R_1 \geq 0$, $R_2 \geq 2$ and $R_3 \geq 0$:
	\begin{subequations}
		\begin{align}
			& C_{15} \geq R_1,\: C_{24} \geq R_2, \: C_{34} \geq R_3, \: C_{45} \geq R_2+R_3\\
			& R_1 \geq I(U_1;X_1|U_2,U_3,Q),\\
			& R_2  \geq I(U_2;X_2|U_1,U_3,Q),\\
			& R_3 \geq I(U_3;X_3|U_1,U_2,Q)\\
			& R_3+R_2 \geq I(X_2,X_3;U_2,U_3|U_1,Q),\\
			& R_3+R_1 \geq I(X_1,X_3;U_1,U_3|U_2,Q)\\
			& R_2+R_1 \geq I(X_1,X_2;U_1,U_2|U_3,Q),\\
			& R_2+R_1+R_3 \geq I(X_1,X_2,X_3;U_1,U_2,U_3|Q).
		\end{align}
		\label{region-5-node}
	\end{subequations}
	
	\noindent Let $C_{\text{sum}}=C_{15}+C_{24}+C_{34}+C_{45}$; consider the region of all pairs $(\Delta,C_{\text{sum}}) \in \mathbb{R}^2_{+}$ for which relevance level $\Delta$ as given by the RHS of~\eqref{delta-bound-example-network-with-hops} is achievable for some $C_{15} \geq 0$, $C_{24} \geq 0$, $C_{34} \geq 0$ and $C_{45} \geq 0$ such that $C_{\text{sum}}=C_{15}+C_{24}+C_{34}+C_{45}$. Hereafter, we denote such region as $\mc{RI}_{\text{sum}}$. Applying Fourier-Motzkin elimination on~ the region defined by~\eqref{delta-bound-example-network-with-hops} and~\eqref{region-5-node}, we get that the region $\mc{RI}_{\text{sum}}$ is given by the union of pairs $(\Delta, C_{\text{sum}}) \in \mathbb{R}^2_{+}$ for which~\footnote{The time sharing random variable is set to a constant for simplicity.}
	\begin{subequations}
		\begin{align}
			\Delta & \leq I\left(Y;U_1,U_2,U_3\right)\\
			C_{\text{sum}} & \geq I(X_1,X_2,X_3;U_1,U_2,U_3)+I(X_2,X_3;U_2,U_3|U_1)\label{eq_csum_ineq_def}
		\end{align}
		\label{relevance-complexity-region-example-network-with-hops}
	\end{subequations}
	for some measure of the form	
	\begin{equation}
		P_{Y} P_{X_1,X_2,X_3|Y} P_{U_1|X_1}P_{U_2|X_2}P_{U_3|X_3}.
		\label{distribution-relevance-complexity-region-example-network-with-hops}	
	\end{equation}
	
	\noindent The next proposition gives a useful parametrization of the region $\mc{RI}_{\text{sum}}$ as described by~\eqref{relevance-complexity-region-example-network-with-hops} and~\eqref{distribution-relevance-complexity-region-example-network-with-hops}.
	
	\begin{proposition}~\label{proposition-parametrization-region-theorem1}
		For every pair $(\Delta,C_{\text{sum}})$ that lies on the boundary of the region described by \eqref{relevance-complexity-region-example-network-with-hops} and~\eqref{distribution-relevance-complexity-region-example-network-with-hops} there exists $s\geq 0$ such that  $(\Delta,C_{sum})=(\Delta_s,C_s)$, with
		\begin{subequations}
			\begin{align}
				\Delta_s&=H(Y)+\max_{\mathbf P}\mc L_s(\mathbf P)+s C_s\\
				C_{s}&=I(X_1,X_2,X_3;U^*_1,U^*_2,U^*_3)+I(X_2,X_3;U^*_2,U^*_3|U^*_1),
			\end{align}\label{region-5-node-sum-delta_s-c_s}
		\end{subequations}
		and $\mathbf P^*$ is the set of pmfs $\mathbf P := \lbrace P_{U_1|X_1},P_{U_2|X_2}, P_{U_3|X_3}\rbrace$ that maximize the cost function
		\begin{align}
			\mc L_s(\mathbf P) &:=-H(Y|U_1,U_2,U_3)-sI(X_1,X_2,X_3;U_1,U_2,U_3)\nonumber\\
			&-sI(X_2,X_3;U_2,U_3|U_1).
			\label{loss_function_hops_5}
		\end{align}	
	\end{proposition}
	\begin{proof}
		See Appendix \ref{appendix-proof-parametrization-region-theorem1}.
	\end{proof}
	
	\noindent In accordance with the studied example network inference problem of Figure~\ref{fig:netrelaygraphsimp}, let a random variable $U_4$ be such that $U_4 \mkv (U_2,U_3) \mkv (X_1,X_2,X_3,Y,U_1)$. That is, the joint distribution factorizes as
	\begin{equation}
		P_{X_1,X_2,X_3,Y,U_1,U_2,U_3,U_4} = P_{X_1,X_2,X_3,Y} P_{U_1|X_1} P_{U_2|X_2} P_{U_3|X_3} P_{U_4|U_2,U_3}.
		\label{augmented-joint-measure-example-network-inference-with-hops}
	\end{equation}
	\noindent Let for given $s \geq 0$ and conditional $P_{U_4|U_2,U_3}$ the Lagrange term
	\begin{align}
		\mc L_s^{\text{low}}(\bm P,P_{U_4|U_2,U_3}) =&-H(Y|U_1,U_4)-sI(X_1;U_1)-2sI(X_2;U_2)\nonumber\\
		&-2s\Big[I(X_3;U_3)- I(U_2;U_1)-I(U_3;U_1,U_2)\Big].
		\label{loss_eq_low_ex}
	\end{align}
	\noindent The following lemma shows that $\mc  L_s^{\text{low}}(\bm P,P_{U_4|U_2,U_3})$ lower bounds $\mc L_s(\dv P)$ as given by~\eqref{loss_function_hops_5}.

	\begin{lemma}
		For every $s \geq 0$ and joint measure that factorizes as~\eqref{augmented-joint-measure-example-network-inference-with-hops}, we have 
		\begin{equation}
			\mc L_s(\mathbf P) \geq \mc L_s^{\text{low}}(\bm P,P_{U_4|U_2,U_3}),
		\end{equation}
		\label{lemma-low-bound}
	\end{lemma}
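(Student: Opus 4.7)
The plan is to show that the two Lagrangians actually agree exactly on the rate (mutual-information) terms, so the entire inequality reduces to the single entropy comparison $-H(Y|U_1,U_2,U_3) \geq -H(Y|U_1,U_4)$, which is immediate from the Markov structure $U_4 \mkv (U_2,U_3) \mkv (X_1,X_2,X_3,Y,U_1)$.

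First I would expand $I(X_1,X_2,X_3;U_1,U_2,U_3)$ by the chain rule in the order $U_1,U_2,U_3$ and simplify each piece using the Markov chains $U_j \mkv X_j \mkv (\text{everything else except } U_j)$ induced by the product form $P_{U_1|X_1}P_{U_2|X_2}P_{U_3|X_3}$. Concretely, $I(X_1,X_2,X_3;U_1)=I(X_1;U_1)$, and
\begin{align*}
I(X_1,X_2,X_3;U_2|U_1) &= H(U_2|U_1)-H(U_2|X_2) = I(X_2;U_2)-I(U_1;U_2),\\
I(X_1,X_2,X_3;U_3|U_1,U_2) &= H(U_3|U_1,U_2)-H(U_3|X_3) = I(X_3;U_3)-I(U_3;U_1,U_2).
\end{align*}
Next I would expand $I(X_2,X_3;U_2,U_3|U_1)$ the same way, using that $H(U_2,U_3|X_2,X_3,U_1)=H(U_2|X_2)+H(U_3|X_3)$ by the same product structure, to obtain
\begin{equation*}
I(X_2,X_3;U_2,U_3|U_1) = I(X_2;U_2)-I(U_1;U_2) + I(X_3;U_3)-I(U_3;U_1,U_2).
\end{equation*}

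Adding these two identities and multiplying by $-s$ gives, term-for-term,
\begin{equation*}
-s\bigl[I(X_1,X_2,X_3;U_1,U_2,U_3)+I(X_2,X_3;U_2,U_3|U_1)\bigr]
= -sI(X_1;U_1)-2s\bigl[I(X_2;U_2)+I(X_3;U_3)\bigr]+2s\bigl[I(U_1;U_2)+I(U_3;U_1,U_2)\bigr],
\end{equation*}
which is precisely the rate part of $\mc L_s^{\text{low}}$. Consequently $\mc L_s(\dv P)-\mc L_s^{\text{low}}(\dv P,P_{U_4|U_2,U_3}) = H(Y|U_1,U_4)-H(Y|U_1,U_2,U_3)$.

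Finally, the Markov structure $U_4 \mkv (U_2,U_3) \mkv (X_1,X_2,X_3,Y,U_1)$ yields $H(Y|U_1,U_2,U_3,U_4)=H(Y|U_1,U_2,U_3)$, and conditioning reduces entropy gives $H(Y|U_1,U_2,U_3,U_4)\leq H(Y|U_1,U_4)$. Combining these two facts establishes the non-negativity of the residual and completes the proof. No step looks particularly tricky; the only bookkeeping that has to be done carefully is verifying the Markov identities when simplifying the conditional entropies in the chain-rule expansion — that is the place where one could easily mis-apply a Markov chain, so I would spell those out explicitly.
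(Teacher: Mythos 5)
Your proof is correct and follows essentially the same route as the paper's: both establish that the rate terms of $\mc L_s(\mathbf P)$ collapse exactly to those of $\mc L_s^{\text{low}}$ via the chain rule and the Markov structure induced by $P_{U_1|X_1}P_{U_2|X_2}P_{U_3|X_3}$, and then conclude with $H(Y|U_1,U_2,U_3)=H(Y|U_1,U_2,U_3,U_4)\leq H(Y|U_1,U_4)$ from $U_4 \mkv (U_2,U_3) \mkv (Y,U_1)$. The only cosmetic difference is that the paper first reduces the bracketed sum to $I(X_1;U_1)+2I(X_2,X_3;U_2,U_3|U_1)$ before expanding, whereas you expand both mutual-information terms directly.
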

	\vspace{-0.4cm}
	\begin{proof}
		See Appendix \ref{appendix-proof-lower-bound}.
	\end{proof}
	
	\noindent For convenience let $\mathbf P_+ := \lbrace P_{U_1|X_1},P_{U_2|X_2},P_{U_3|X_3},P_{U_4|U_2,U_3}\rbrace$. The optimization of \eqref{loss_eq_low_ex} generally requires the computation of marginal distributions, which can be costly in practice. Hereafter we derive a variational lower bound on $\mc  L_s^{\text{low}}$ with respect to some arbitrary (variational) distributions.
	Specifically, let
	\begin{equation}
		\bm Q := \{ Q_{Y|U_1,U_4},Q_{U_3},Q_{U_2},Q_{U_1}\},
	\end{equation}
	where $Q_{Y|U_1,U_4}$ represents variational (possibly stochastic) decoders and $Q_{U_3}$, $Q_{U_2}$ and $Q_{U_1}$ represent priors. Also, let
	\begin{align}
		\mc L_s^{\text{v-low}}(\bm P_+,\bm Q):=&\mathbb{E}[\log Q_{Y|U_1,U_4}(Y|U_1,U_4)]-sD_{\mathrm{KL}}(P_{U_1|X_1}\Vert Q_{U_1})\nonumber\\
		&-2sD_{\mathrm{KL}}(P_{U_2|X_2}\Vert Q_{U_2})-2sD_{\mathrm{KL}}(P_{U_3|X_3}\Vert Q_{U_3}).
		\label{loss_prob_function_hops_5}
	\end{align}
	
	\noindent The following lemma, the proof of which is essentially similar to that of~\cite[Lemma 1]{aguerri2019distributed}, shows that for every $s \geq 0$, the cost function $\mc L_s^{\text{low}}(\bm P,P_{U_4|U_2,U_3})$ is lower-bounded by $\mc L_s^{\text{v-low}}(\bm P_+,\bm Q)$ as given by~\eqref{loss_prob_function_hops_5}. 
	
	\begin{lemma}
		For fixed $\bm P_+$, we have
		\begin{equation}
			\mc L_s^{\text{low}}(\bm P_+)\geq  \mc L_s^{\text{v-low}}(\bm P_+,\bm Q)
		\end{equation}
		for all pmfs $\bm Q$, with equality when:
		\begin{align}
			Q_{Y|U_1,U_4}&=P_{Y|U_1,U_4},\\
			Q_{U_3}&=P_{U_3|U_2,U_1},\\
			Q_{U_2}&=P_{U_2|U_1},\\
			Q_{U_1}&=P_{U_1},
		\end{align}
		where $P_{Y|U_1,U_4}$,$P_{U_3|U_2,U_1}$,$P_{U_2|U_1}$,$P_{U_1}$ are calculated using~\eqref{augmented-joint-measure-example-network-inference-with-hops}.\label{lemma-v_low_bound}
	\end{lemma}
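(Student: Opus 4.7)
The plan is to show $\mc L_s^{\text{low}}(\bm P_+) - \mc L_s^{\text{v-low}}(\bm P_+,\bm Q) \geq 0$ by decomposing the difference into a sum of four non-negative Kullback--Leibler gaps, one associated with each element of $\bm Q$. Each gap will vanish precisely under one of the four equality conditions listed in the statement, giving both the inequality and the tightness claim in one stroke.

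First, for the label-prediction term I would rewrite $-H(Y|U_1,U_4)=\mathbb{E}[\log P_{Y|U_1,U_4}(Y|U_1,U_4)]$, where $P_{Y|U_1,U_4}$ is the conditional induced by~\eqref{augmented-joint-measure-example-network-inference-with-hops}. Subtracting $\mathbb{E}[\log Q_{Y|U_1,U_4}(Y|U_1,U_4)]$ and taking the inner expectation yields $\mathbb{E}_{U_1,U_4}[D_{\mathrm{KL}}(P_{Y|U_1,U_4}\Vert Q_{Y|U_1,U_4})]\geq 0$, which vanishes iff $Q_{Y|U_1,U_4}=P_{Y|U_1,U_4}$.

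For the remaining three contributions I would invoke the standard decomposition
\begin{equation*}
\mathbb{E}_{X}\!\left[D_{\mathrm{KL}}(P_{U|X}\Vert Q_U)\right] = I(X;U) + D_{\mathrm{KL}}(P_U\Vert Q_U),
\end{equation*}
together with its conditional counterpart. For the $U_1$ term this gives directly $sD_{\mathrm{KL}}(P_{U_1|X_1}\Vert Q_{U_1})-sI(X_1;U_1)=sD_{\mathrm{KL}}(P_{U_1}\Vert Q_{U_1})\geq 0$, with equality iff $Q_{U_1}=P_{U_1}$. The delicate step is the $U_2$ and $U_3$ terms, since $\mc L_s^{\text{low}}$ carries the \emph{positive} cross-terms $+2sI(U_2;U_1)$ and $+2sI(U_3;U_1,U_2)$ that must be absorbed. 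Here the factorization~\eqref{augmented-joint-measure-example-network-inference-with-hops} implies the Markov chains $U_1-X_2-U_2$ and $(U_1,U_2)-X_3-U_3$, from which the chain rule for mutual information gives
\begin{align*}
I(X_2;U_2) &= I(X_2;U_2|U_1) + I(U_1;U_2), \\
I(X_3;U_3) &= I(X_3;U_3|U_1,U_2) + I(U_1,U_2;U_3).
\end{align*}
Substituting into $\mc L_s^{\text{low}}$ cancels the cross mutual-information terms exactly. Interpreting the ``priors'' $Q_{U_2}$ and $Q_{U_3}$ as conditional distributions (which is what the stated equality conditions encode), the conditional form of the decomposition identity then converts the residual $U_2$ and $U_3$ contributions into $2s\,\mathbb{E}_{U_1}[D_{\mathrm{KL}}(P_{U_2|U_1}\Vert Q_{U_2|U_1})]\geq 0$ and $2s\,\mathbb{E}_{U_1,U_2}[D_{\mathrm{KL}}(P_{U_3|U_1,U_2}\Vert Q_{U_3|U_1,U_2})]\geq 0$, with equality respectively at $Q_{U_2}=P_{U_2|U_1}$ and $Q_{U_3}=P_{U_3|U_2,U_1}$, exactly as claimed.

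The only real obstacle is the bookkeeping of the Markov conditional independences induced by~\eqref{augmented-joint-measure-example-network-inference-with-hops} to verify that the cross mutual-information terms cancel term-by-term; once this is done, the argument is structurally identical to the single-hop variational bound of~\cite[Lemma~1]{aguerri2019distributed} and invokes no information-theoretic inequality beyond the non-negativity of KL divergence.
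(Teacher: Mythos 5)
Your proposal is correct and follows essentially the same route as the paper's proof in Appendix D: both rewrite $-H(Y|U_1,U_4)$ and each mutual-information term via the variational identities $H(Y|Z)=\mathbb{E}[-\log Q_{Y|Z}]-D_{\mathrm{KL}}(P_{Y|Z}\Vert Q_{Y|Z})$ and $I(X;U)=\mathbb{E}_X[D_{\mathrm{KL}}(P_{U|X}\Vert Q_U)]-D_{\mathrm{KL}}(P_U\Vert Q_U)$, and reduce the gap $\mc L_s^{\text{low}}-\mc L_s^{\text{v-low}}$ to a sum of non-negative KL divergences whose vanishing gives exactly the stated equality conditions. The only cosmetic difference is that you first merge $-2sI(X_2;U_2)+2sI(U_2;U_1)$ into $-2sI(X_2;U_2|U_1)$ via the chain rule and Markov structure before applying the identity, whereas the paper applies the identity term by term and lets the marginal KL terms cancel; the resulting residuals are identical.
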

	\begin{proof}
		See Appendix \ref{appendix-proof-variational-lower-bound}.
	\end{proof}
	
	\noindent From the above, we get that
	\begin{equation}
		\max_{\bm P_+}\mc L_s^{\text{low}}(\bm P_+) = \max_{\bm P_+} \max_{\bm Q} \mc L_s^{\text{v-low}}(\bm P_+,\bm Q).
	\end{equation}
	Since, as described in Section~\ref{prob_formulation}, the distribution of the data is not known, but only a set of samples is available $\{(x_{1,i},\hdots,x_{J,i},y_i)\}_{i=1}^n$, we restrict the optimization of \eqref{loss_prob_function_hops_5} to the family of distributions that can be parametrized by NNs. Thus, we obtain the following loss function which can be optimized empirically, in a distributed manner, using gradient based techniques,
	\begin{align}
		\mc L^{\text{NN}}_s(n)&:=\frac{1}{n} \sum_{i=1}^n \Bigg[ \log Q_{\phi_5}(y_i|u_{1,i},u_{4,i}) -s \log \left(\dfrac{P_{\theta_1}(u_{1,i}|x_{1,i})}{Q_{\varphi_1}(u_{1,i})}\right)\Bigg]\nonumber\\
		&-\dfrac{2s}{n} \sum_{i=1}^n \Bigg[ \log \left(\dfrac{P_{\theta_2}(u_{2,i}|x_{2,i})}{Q_{\varphi_2}(u_{2,i})}\right)+\log \left(\frac{P_{\theta_3}(u_{3,i}|x_{3,i})}{Q_{\varphi_3}(u_{3,i})}\right)  \Bigg],
		\label{loss function_5_node}
	\end{align}
	with $s$ stands for a Lagrange multiplier and the distributions $Q_{\phi_5},P_{\theta_4},P_{\theta_3},P_{\theta_2},P_{\theta_1}$ are variational ones whose parameters are determined by the chosen NNs using the re-parametrization trick of \cite{kingma2013auto}; and, $\{Q_{\varphi_i}\: : \:  i \in \{1,2,3\} \}$ are priors known to the encoders. The parametrization of the distributions with NNs is performed similarly to that for the setting of Section~\ref{wohops}.
	\begin{figure}[htpb]
		\centering
		\includegraphics[width=1\linewidth]{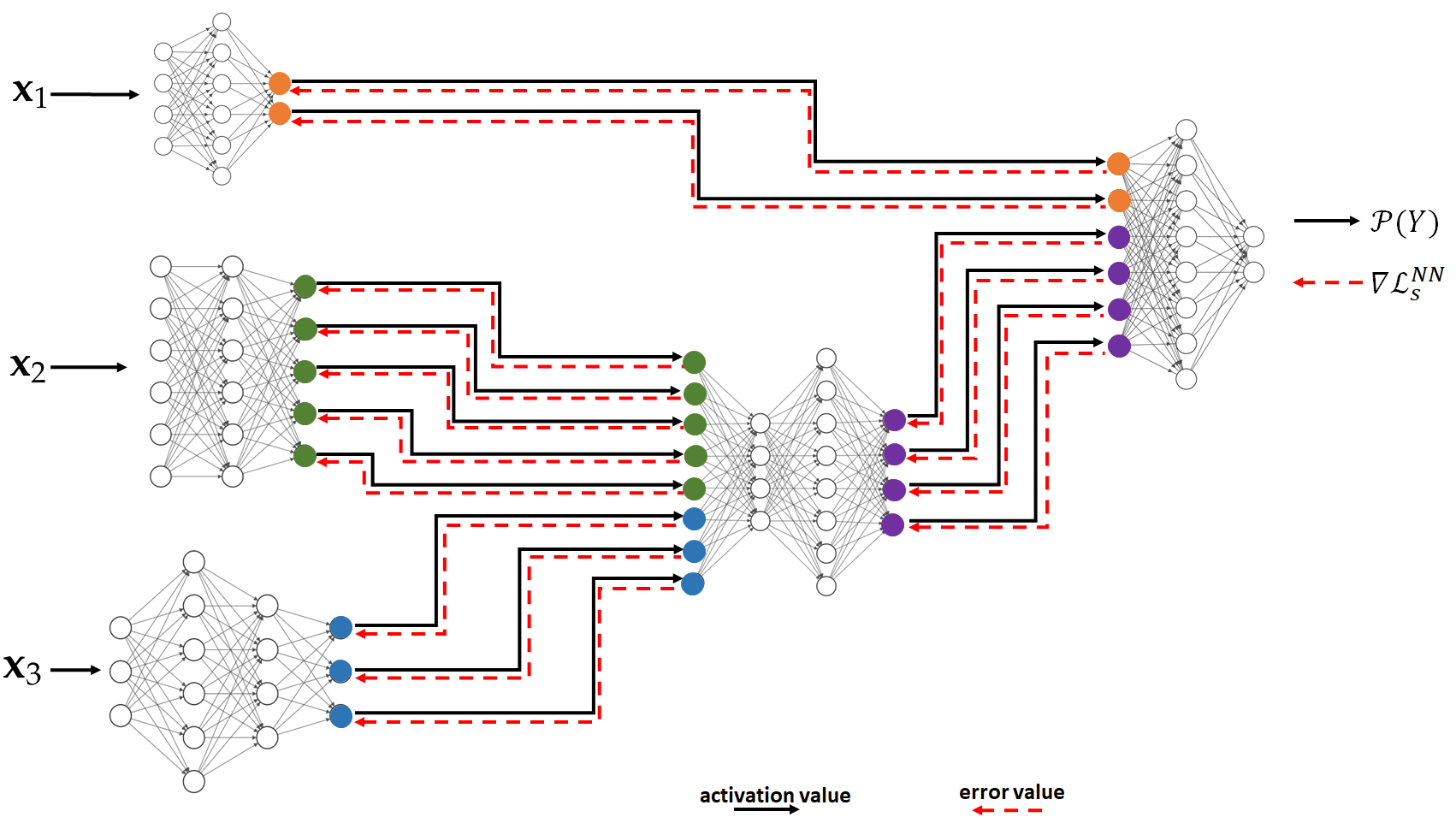}
		\caption{Forward and backward passes for the inference problem of Figure~\ref{fig:netrelaygraphsimp}}
		\label{fig-nn_fig_5_nodes}
	\end{figure}
	
	\subsubsection{Training Phase}
	During the forward pass, every node $j \in \{1,2,3\}$ processes mini-batches of size, $b_j$ of its training data set $\dv x_j$. Nodes $2$ and $3$ send their vector formed of the activation values of the last layer of their NNs to node $4$. Because the sizes of the last layers of the NNs of nodes $2$ and $3$ are chosen according to~\eqref{condition2-concatenation-activations-vectors} the sent activation vectors are concatenated vertically at the input layer of NN $4$. The forward pass continues on the NN at node $4$ until its last layer. Next, nodes $1$ and $4$ send the activation values of their last layers to node $5$. Again, as the sizes of the last layers of the NNs of nodes $1$ and $4$ satisfy~\eqref{condition2-concatenation-activations-vectors} the sent activation vectors are concatenated vertically at the input layer of NN $5$; and the forward pass continues until the last layer of NN $5$.
	
	\noindent During the backward pass, each of the NNs updates its parameters according to~\eqref{equations-backpropagation} and~\eqref{equations-parameters-update}. Node $5$ is the first to apply the back propagation procedure in order update the parameters of its NN. It applies \eqref{equations-backpropagation} and~\eqref{equations-parameters-update} sequentially, starting from its last layer.
	
	\begin{remark}
		It is important to note that, similar to the setting of Section III-A, for the computation of the RHS of~\eqref{back_prop_out_layer} for node $5$, only the derivative of $\mc L^{\text{NN}}_{s}(n)$ w.r.t. the activation vector $\dv a^{L_{5}}_{5}$ is required, which depends only on $Q_{\phi_5}(y_i|u_{1,i},u_{4,i})$. The distributions are known to node 5 given only $u_{1,i}$ and $u_{4,i}$.
	\end{remark}
	
	The error propagates back until it reaches the first layer of the NN of node $5$. Node $5$ then splits horizontally the error vector of its input layer into $2$ sub-vectors with the top sub-error vector having as size that of the last layer of the NN of node $1$ and the bottom sub-error vector having as size that of the last layer of the NN of node $4$ -- see Figure~\ref{fig-nn_fig_5_nodes}. Similarly, the two nodes $1$ and $4$ continue the backward propagation at their turns simultaneously. Node $4$ then splits horizontally the error vector of its input layer into $2$ sub-vectors with the top sub-error vector having as size that of the last layer of the NN of node $2$ and the bottom sub-error vector having as size that of the last layer of the NN of node $3$. Finally, the backward propagation continues on the NNs of nodes $2$ and $3$. The entire process continues until convergence. 
	
	\begin{remark}
		Let $\boldsymbol{\delta}_{J+1}^{[1]}(j)$ denote the sub-error vector sent back from node $(J+1)$ to node $j \in \mc J$. It is easy to see that, for every $j \in \mc J$, 
		\begin{align*}		
			\nabla_{\bm a_{4}^{[L]}}\mc L^{NN}_s(b)&=\boldsymbol{\delta}_{5}^{[1]}(4),\\
			\nabla_{\bm a_{3}^{[L]}}\mc L^{NN}_s(b)&=\boldsymbol{\delta}_{4}^{[1]}(3)-2s\nabla_{\bm a_{3}^{[L]}}\left[  \frac{1}{b} \sum_{i=1}^b \left[\log \left(\frac{P_{\theta_3}(u_{3,i}|x_{3,i})}{Q_{\varphi_3}(u_{3,i})}\right)\right]\right],\\
			\nabla_{\bm a_{2}^{[L]}}\mc L^{NN}_s(b)&=\boldsymbol{\delta}_{4}^{[1]}(2)-2s\nabla_{\bm a_{2}^{[L]}}\left[  \frac{1}{b} \sum_{i=1}^b \left[\log \left(\frac{P_{\theta_2}(u_{2,i}|x_{2,i})}{Q_{\varphi_2}(u_{2,i})}\right)\right]\right],\\
			\nabla_{\bm a_{1}^{[L]}}\mc L^{NN}_s(b)&=\boldsymbol{\delta}_{5}^{[1]}(1)-s\nabla_{\bm a_{1}^{[L]}}\left[  \frac{1}{b} \sum_{i=1}^b \left[\log \left(\frac{P_{\theta_1}(u_{1,i}|x_{1,i})}{Q_{\varphi_1}(u_{1,i})}\right)\right]\right].
		\end{align*}
		and this explains why, for back propagation, nodes $1,2,3,4$ need only part of the error vector at the node they are connected to.
	\end{remark}
	
	\subsubsection{Inference Phase}
	
	During this phase, nodes $1$, $2$ and $3$ observe (or measure) each a new sample. Let $\dv x_1$ be the sample observed by node $1$; and $\dv x_2$ and $3$ those observed by node $2$ and node $3$, respectively. Node $1$ processes $\dv x_1$ using its NN and sends an encoded value $\dv u_1$ to node $5$; and so do nodes $2$ and $3$ towards node $4$. Upon receiving $\dv u_2$ and $\dv u_3$ from nodes $2$ and $3$, node $4$ concatenates them vertically and processes the obtained vector using its NN. The output $\dv u_4$ is then sent to node $5$. The latter performs similar operations on the activation values $\dv u_1$ and $\dv u_4$; and outputs an estimate of the label $\dv y$ in the form of a soft output $Q_{\phi_5}(\dv y|\dv u_1,\dv u_4)$. 
	
	\subsection{Bandwidth requirements}\label{bandreq}
	
	In this section, we study the bandwidth requirements of our in-network learning. Let $q$ denote the size of the entire data set (each input node has a local dataset of size $\frac{q}{J}$),  $p=L_{J+1}$ the size of the input layer of NN $(J+1)$ and $s$ the size in bits of a parameter. Since as per~\eqref{condition2-concatenation-activations-vectors}, the output of the last layers of the input NNs are concatenated at the input of NN $(J+1)$ whose size is $p$, and each activation value is $s$ bits, one then needs $\dfrac{2sp}{J}$ bits for each data point -- the factor $2$ accounts for both the forward and backward passes; and, so, for an epoch our in-network learning requires $\dfrac{2pqs}{J}$ bits. 
	
	Note that the bandwidth requirement of in-network learning does not depend on the sizes of the NNs used at the various nodes, but does depend on the size of the dataset. For comparison, notice that  with FL one would require $2NJs$, where $N$ designates the number of (weight- and bias) parameters of a NN at one node. For the SL of~\cite{gupta2018distributed}, assuming for simplicity that the NNs $j=1,\hdots,J$ all have the same size $\eta N$, where $\eta \in [0,1]$, SL requires $(2pq+\eta NJ)s$ bits for an entire epoch.
	
	The bandwidth requirements of the three schemes are summarized and compared in Table~\ref{band_res} for two popular NNs architectures, VGG16 ($N=138,344,128$ parameters) and ResNet50 ($N=25,636,712$ parameters) and two example datsets, $q =50, 000$ data points and $q=500, 000$ data points. The numerical values are set as $J=500$, $p=25088$ and $\eta=0.88$ for ResNet50 and $0.11$ for VGG16. 
	
	\begin{table}[!ht]
		\centering
		
		\begin{tabular}{|c|l|l|l|}
			\hline
			& \thead{Federated\\learning} & \thead{Split\\learning} & \thead{In-network\\learning} \\ \hline
			\thead{Bandwidth\\requirement}  & $	2NJs$          &    $\left(2pq+\eta NJ\right)s$            &   \thead{$\dfrac{2pqs}{J}$}    \\ \hline
			\thead{VGG 16  \\ 50,000 data points}   &   4427 Gbits  &  324 Gbits     & 0.16 Gbits     \\ \hline
			\thead{ResNet 50  \\ 50,000 data points}	&   820 Gbits    &  441 Gbits & 0.16 Gbits     \\ \hline
			\thead{VGG 16  \\ 500,000 data points}  		&   4427 Gbits      &   1046 Gbits   & 1.6 Gbits           \\ \hline
			\thead{ResNet 50 \\ 500,000 data points} 		&   820 Gbits   &   1164 Gbits   & 1.6 Gbits            \\ \hline
		\end{tabular}
		\caption{Comparison of bandwidth requirements}
		\label{band_res}
		%\vspace{-0.4cm}
	\end{table}

	Compared to FL and SL, INL has an advantage in that all nodes work jointly also during inference to make a prediction,not just during the training phase. As a consequence nodes only need to exchange latent representations, not model parameters, during training.
	
	\section{Experimental Results}\label{section:exp}
	
	We perform two series of experiments for which we compare the performance of our INL with those of FL and SL. The dataset used is the CIFAR-10 and there are five client nodes. In the first experiment the three techniques are implemented in such a way such that during the inference phase the same NN is used to make the predictions. In the second experiment the aim is to implement each of the techniques such that the data is spread in the same manner across the five client nodes for each of the techniques. 
	\vspace{-0.3cm}
	\subsection{Experiment 1}
	In this setup, we create five sets of noisy versions of the images of CIFAR-10. To this end, the CIFAR images are first normalized, and then corrupted by additive Gaussian noise with standard deviation set respectively to $0.4, 1, 2, 3, 4$. 
	For our INL each of the five input NNs is trained on a different noisy version of the same image. Each NN uses a variation of the VGG network of~\cite{vgg_small_data}, with the categorical cross-entropy as the loss function, L2 regularization, and Dropout and BatchNormalization layers. Node $(J+1)$ uses two dense layers. The architecture is shown in Figure~\ref{fig:nt_arh_5}. In the experiments, all five (noisy) versions of every CIFAR-10 image are processed simultaneously, each by a different NN at a distinct node, through a series of convolutional layers. The outputs are then concatenated and then passed through a series of dense layers at node $(J+1)$. 
	
	\begin{figure}[!t]
		\centering
		\includegraphics[width=0.6\linewidth]{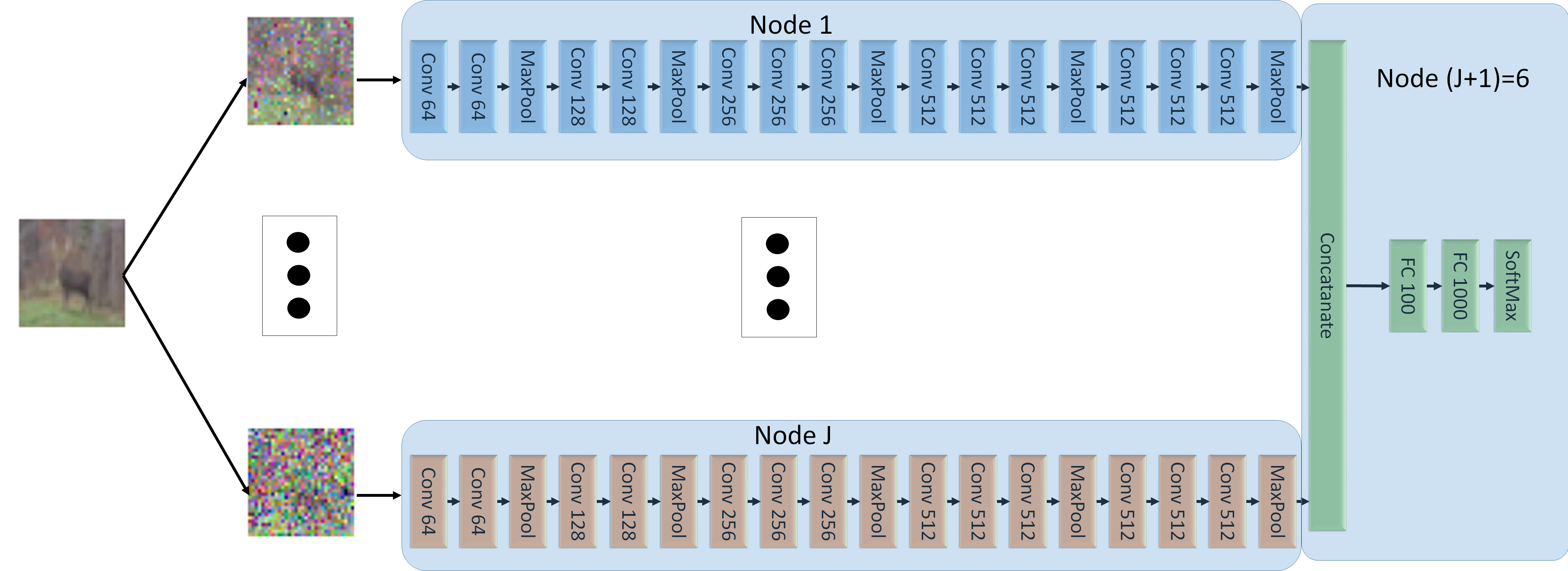}
		\caption{Network architecture. \textit{Conv} stands for a convolutional layer, \textit{Fc} stand for a fully connected layer.}
		\label{fig:nt_arh_5}
		\vspace{-0.2cm}
	\end{figure}

		\begin{figure}[!ht]
		\centering
		\begin{subfigure}{0.7\linewidth}
			\centering
			\includegraphics[width=0.8\linewidth]{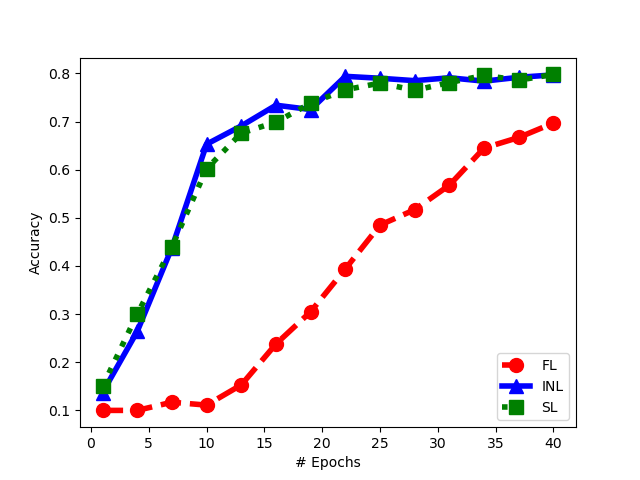}
			\caption{Accuracy vs. $\#$ of epochs.}
			\label{fig:accvsepochexp1}
		\end{subfigure}
		\hfill
		\begin{subfigure}{0.7\linewidth}
			\centering
			\includegraphics[width=0.8\linewidth]{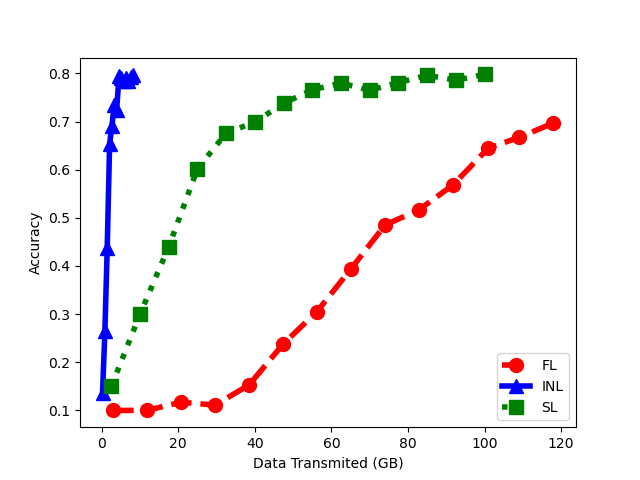}
			\caption{Accuracy vs. bandwidth cost.}
			\label{fig:accvsdataexp1}
		\end{subfigure}
		\caption{Comparison of INL, FL and SL - Experiment 1}
		\label{fig:experiment_results_1}
		\vspace{-0.2cm}
	\end{figure}
	
	For FL, each of the five client nodes is equipped with the \textit{entire} network of Figure~\ref{fig:nt_arh_5}. The dataset is split into five sets of equal sizes; and the split is now performed such that all five noisy versions of a same CIFAR-10 image are presented to the same client NN (distinct clients observe different images, however). For SL of~\cite{gupta2018distributed}, each input node is equipped with an NN formed by \textit{all} fives branches with convolution networks (i.e., all the network of Fig.~\ref{fig:nt_arh_5}, except the part at Node $(J+1)$); and node $(J+1)$ is equipped with fully connected layers at Node $(J+1)$ in Figure~\ref{fig:nt_arh_5}. Here, the processing during training is such that each input NN concatenates vertically the outputs of all convolution layers and then passes that to node $(J+1)$, which then propagates back the error vector. After one epoch at one NN, the learned weights are passed to the next client, which performs the same operations on its part of the dataset.
	
	%\vspace{-0.2cm}

	Figure~\ref{fig:accvsepochexp1} depicts the evolution of the classification accuracy on CIFAR-10 as a function of the number of training epochs, for the three schemes. As visible from the figure, the convergence of FL is relatively slower comparatively. Also the final result is less accurate. Figure~\ref{fig:accvsdataexp1} shows the amount of data needed to be exchanged among the nodes (i.e., bandwidth resources) in order to get a prescribed value of classification accuracy. Observe that both our INL and SL require significantly less data exchange than FL; and our INL is better than SL especially for small values of bandwidth.
	
	\vspace{-0.2cm}
	\subsection{Experiment 2}
	
	\begin{figure}[!t]
		\centering
		\includegraphics[width=0.9\linewidth]{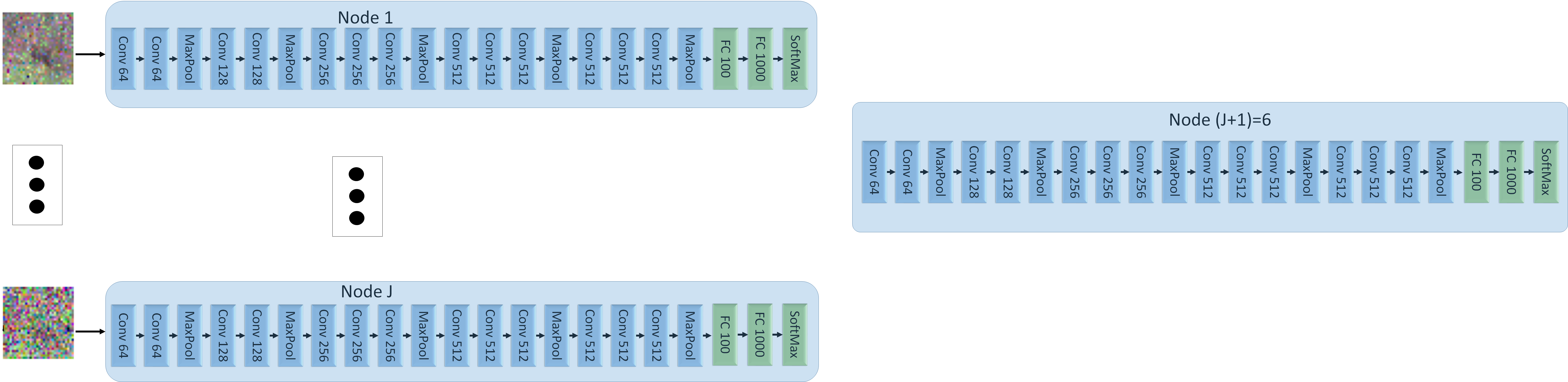}
		\caption{Used NN architecture for FL in Experiment 2} 
		\label{fig:accvsdata}
		\vspace{-0.2cm}
	\end{figure}
	
	\begin{figure}[!b]
		\centering
		\begin{subfigure}{0.7\linewidth}
			\centering
			\includegraphics[width=0.8\linewidth]{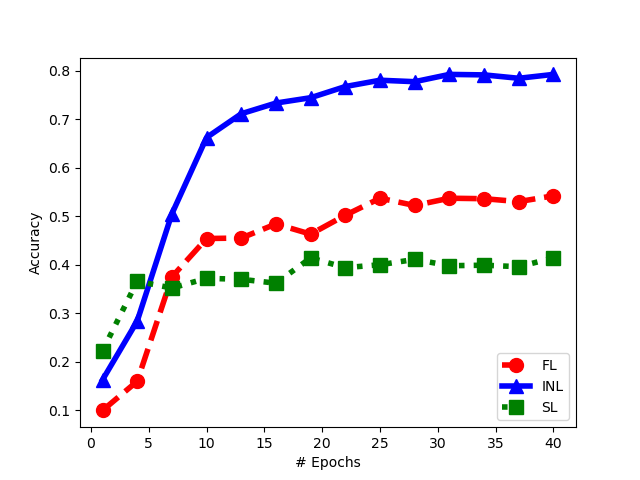}
			\caption{Accuracy vs. $\#$ of epochs.}
			\label{fig:accvsepochexp2}
		\end{subfigure}
		\hfill
		\begin{subfigure}{0.7\linewidth}
			\centering
			\includegraphics[width=0.8\linewidth]{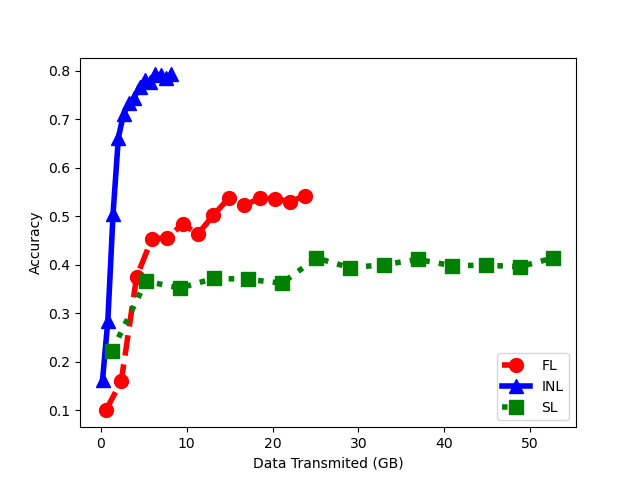}
			\caption{Accuracy vs. bandwidth cost.}
			\label{fig:accvsdataexp2}
		\end{subfigure}
		\caption{Comparison of INL, FL and SL - Experiment 2.}
		\label{fig:experiment_results_2}
		\vspace{-0.2cm}
	\end{figure}
	In Experiment 1, the entire training dataset was partitioned differently for INL, FL and SL (in order to account for the particularities of the three). In this second experiment, they are all trained on the same data. Specifically, each client NN sees all CIFAR-10 images during training; and its local dataset differs from those seen by other NNs only by the amount of added Gaussian noise (standard deviation chosen respectively as $0.4, 1, 2, 3, 4$). Also, for the sake of a fair comparison between INL, FL and SL the nodes are set to utilize fairly the same NNs for the three of them (see, Fig.~\ref{fig:accvsdata}).

	Figure~\ref{fig:accvsdataexp2} shows the performance of the three schemes during the inference phase in this case (for FL the inference is performed on an image which has average quality of the five noisy input images for INL and SL). Again, observe the benefits of INL over FL and SL in terms of both achieved accuracy and bandwidth requirements.

	%================================================================================================
	\vspace{-0.2cm}
	\appendix
	
%================================================================================================
\renewcommand{\theequation}{A-\arabic{equation}}
\setcounter{equation}{0}  % reset counter
\subsection{Proof of Theorem \ref{thorem_achive}}\label{appendix-proof-capacity-relavance-region}

The proof of Theorem 1 is based on a scheme in which the observations $\{\dv x_j\}_{j \in \mc J}$ are compressed distributively using Berger-Tung coding~\cite{berger_tung_coding}; and, then, the compression bin indices are transmitted as independent messages over the network $\mc G$ using linear-network coding~\cite[Section 15.4]{elgamalkim2011}. The decision node $N$ first decompresses 
the compression codewords and then uses them to produce an estimate $\hat{Y}$ of $Y$. In what follows, for simplicity we set the time-sharing random variable to be a constant, i.e., $Q=\emptyset$. Let $0 < \epsilon^{''} < \epsilon^{'} < \epsilon$. 

\subsubsection{Codebook Generation} Fix a joint distribution $P_{X_1,\hdots,X_J,Y,U_1,\hdots,U_J}$ that factorizes as given by~\eqref{joint-measure-statement-theorem1}. Also, let $D=H(Y|U_1,\hdots,U_J)$; and, for $(u_1,\hdots,u_J) \in \mc U_1 \times \hdots \times \mc U_J$, the reconstruction function $\hat{y}(\cdot | u_1,\hdots,u_J) \in \mc P(\mc Y)$ such that $\mathbb{E}\left[d(Y,\hat{Y})\right] \leq \dfrac{D}{1+\epsilon}$, where $d: \mc Y \times \mc P(\mc Y) \longrightarrow \mathbb{R}_{+}$ is the distortion measure given by~\eqref{definition-log-loss-distorsion-measure}. For every $j \in \mc J$, let $\tilde{R}_j \geq R_j$. Also, randomly and independently generate $2^{n\tilde{R}_j}$ sequences $u_j^n(l_j)$, $l_j \in [1:2^{n\tilde{R}_j}]$, each according to $\prod_{i=1}^n p_{U_j}(u_{ji})$. Partition the set of indices $l_j \in 2^{n\tilde{R}_j}$ into equal size bins $B_j(m_j)=\left[(m_j-1)2^{n\tilde{R}_j-R_j}:m_j2^{n\tilde{R}_j-R_j}\right]$, $m_j \in [1: 2^{nR_j}]$. The codebook is revealed to all source nodes $j \in \mc J$ as well as to the decision node $N$, but not to the intermediary nodes.

\subsubsection{Compression of the observations} Node $j \in \mc J$ observes $x_j^n$ and finds an index $l_j \in[1: 2^{n\tilde{R}_j}]$ such that $(x_j^n,u_j^n(l_j)) \in \mc T^{(n)}_{\epsilon^{''}}$. If there is more than one index the node selects one at random. If there is no such index, it selects one at random from $[1: 2^{n\tilde{R}_j}]$. Let $m_j$ be the index of the bin that contains the selected $l_j$, i.e., $l_j \in \mc B_j(m_j)$.

\subsubsection{Transmission of the compression indices over the graph network} In order to transmit 
the bins indices $(M_1,\hdots,M_J) \in [1:2^{nR_1}] \times \hdots \times [1:2^{nR_J}]$ to the decision node $N$ over the  graph network $\mc G=(\mc E, \mc N, \mc C)$, they are encoded as if they were independent-messages using the linear network coding scheme of~\cite[Theorem 15.5]{elgamalkim2011}; and then transmitted over the network. The transmission of the multimessage $(M_1,\hdots,M_J) \in [1:2^{nR_1}] \times \hdots \times [1:2^{nR_J}]$ to the decision node $N$ is without error as long as for all $\mc S \subseteq [1:N-1]$ we have 
\begin{equation}
	\sum_{j \in \mc S \cap \mc J} R_j \leq C(\mc S)
\end{equation}
where $C(\mc S)$ is defined by~\eqref{definition-cut-set}.

\subsubsection{Decompression and estimation} The decision node $N$ first looks for the unique tuple $(\hat{l}_1,\hdots,\hat{l}_J)\in \mc B_1(m_1)\times \hdots \times \mc B_J(m_J)$ such that $(u_1^n(\hat{l}_1),\hdots,u_J^n(\hat{l}_J)) \in \mc T^{(n)}_{\epsilon}$. With high probability, Node $N$ finds such a unique tuple as long as $n$ is large and for all $\mc S \subseteq \mc J$ it holds that~\cite{berger_tung_coding} (see also~\cite[Theorem 12.1]{elgamalkim2011}) 
\begin{equation}
	\sum_{j \in \mc S} R_j \geq I(U_{\mc S};X_{\mc S}|U_{\mc S ^c}).
	\label{inequalities-Berger-Tung-coding}
\end{equation}

\noindent The decision node $N$ then produces an estimate $\hat{y}^n$ of $y^n$ as $\hat{y}(u_1^n(\hat{l}_1),\hdots,u_J^n(\hat{l}_J))$.

 \noindent It can be shown easily that the per-sample relevance level achieved using the described scheme is $\Delta = I(U_1,\hdots,U_J;Y)$; and this completes the proof of Theorem~\ref{thorem_achive}.
%=================================================================================
\renewcommand{\theequation}{B-\arabic{equation}}

\setcounter{equation}{0}  % reset counter
\subsection{Proof of Proposition~\ref{proposition-parametrization-region-theorem1}}\label{appendix-proof-parametrization-region-theorem1}

For $ C_{sum}\geq 0$ fix $s \geq 0$ such that $C_s=C_{sum}$; and let $\bm P^*=\lbrace P_{U^*_1|X_1},P_{U^*_2|X_2}, P_{U^*_3|X_3}\rbrace$ be the solution to \eqref{loss_function_hops_5} for the given $s$. By making the substitution in \eqref{region-5-node-sum-delta_s-c_s}: 
\begin{align}
	\Delta_s=&I(Y;U^*_1,U^*_2,U^*_3)\\
	\leq&\Delta\label{eq_proof_bound_part1}
\end{align}
where \eqref{eq_proof_bound_part1} holds since $\Delta$ is the maximum $I(Y;U_1,U_2,U_3)$ over all distribution for which \eqref{eq_csum_ineq_def} holds, which includes $\bm P^*$.\\
Conversely let $\bm P^*$ be such that $(\Delta,C_{\text{sum}})$ is on the bound of the $\mc{RI}_{\text{sum}}$ then:
\begin{align}
	\Delta=&H(Y)-H(Y|U^*_1,U^*_2,U^*_3)\nonumber\\
	\leq&H(Y)-H(Y|U^*_1,U^*_2,U^*_3)+sC_{\text{sum}}\nonumber\\
	&-s\big[I(X_2,X_3;U^*_2,U^*_3|U^*_1)+I(X_1,X_2,X_3;U^*_1,U^*_2,U^*_3)\big]\label{def_c_sum}\\
	\leq& H(Y)+\max_{\mathbf P}\mc L_s(\mathbf P) + sC_{\text{sum}}\label{eq:max_prop}\\
	=& \Delta_s-sC_s+sC_{\text{sum}}\nonumber\\
	=& \Delta_s+s(C_{\text{sum}}-C_s).\label{final_eq}
\end{align}
Where ~\eqref{def_c_sum} follows from \eqref{eq_csum_ineq_def}. Inequality \eqref{eq:max_prop} holds due to the fact that $\max_{\mathbf P} \mc L(\mathbf P)$ takes place over all $\bm P$, including $\bm P^*$. Since~\eqref{final_eq} is true for any $s\geq 0$ we take $s$ such that $C_{\text{sum}}=C_s$, which implies $\Delta \leq \Delta_s$. Together with \eqref{eq_proof_bound_part1} this completes the proof.

%=============================================================================
\renewcommand{\theequation}{C-\arabic{equation}}
\setcounter{equation}{0}  % reset counter
\subsection{Proof of Lemma \ref{lemma-low-bound}}\label{appendix-proof-lower-bound}
We have
\begin{align}
	\mc L_s(\mathbf P) =& -H(Y|U_1,U_2,U_3)-sI(X_1,X_2,X_3;U_1,U_2,U_3)\nonumber\\
	&-sI(X_2,X_3;U_2,U_3|U_1)\\
	=&-H(Y|U_1,U_2,U_3)\nonumber\\
	&-s\Bigg[ I(X_1;U_1)+2I(X_2,X_3;U_2,U_3|U_1)\Bigg]\label{loss_eq_1_ex}\\
	=&-H(Y|U_1,U_2,U_3)-sI(X_1;U_1)-2sI(X_2;U_2)\nonumber\\
	&-2s\bigg[I(X_3;U_3)-I(U_3;U_1,U_2)-I(U_2;U_1)\bigg]\label{loss_eq_3_ex}\\
	=&-H(Y|U_1,U_2,U_3)-sI(X_1;U_1)-2sI(X_2;U_2)\nonumber\\
	&+2s\Big[I(U_2;U_1)+I(U_3;U_1,U_2)-I(X_3;U_3)\Big]\label{loss_eq_5_ex}\\
	\geq&-H(Y|U_1,U_4)-sI(X_1;U_1)-2s\Big[I(X_2;U_2)+I(X_3;U_3)\Big]\nonumber\\
	&+2s\Big[ I(U_2;U_1)+I(U_3;U_1,U_2)\Big]\label{loss_eq_6_ex}
\end{align}
where~\eqref{loss_eq_1_ex} holds since $U_1 \mkv X_1 \mkv (X_2,X_3,U_2,U_3)$ and $(U_2,U_3) \mkv (X_2,X_3) \mkv (U_1,X_1)$;~\eqref{loss_eq_3_ex} holds since $U_2 \mkv X_2 \mkv (U_1,X_3)$ and $U_3 \mkv X_3 \mkv (U_1,U_2,X_2)$; ~\eqref{loss_eq_6_ex} hold since $U_4 \mkv (U_2,U_3) \mkv (Y,U_1)$.
%=============================================================================
\renewcommand{\theequation}{D-\arabic{equation}}
\setcounter{equation}{0}  % reset counter

\subsection{Proof of Lemma \ref{lemma-v_low_bound} } \label{appendix-proof-variational-lower-bound}
From \cite[eq. (55)]{aguerri2019distributed} it can be shown that for any pmf $Q_{Y|Z}(y|z)$ , $y \in \mc Y$ and $z \in \mc Z$ the conditional entropy $H(Y|Z)$ is :
\begin{equation}
	H(Y|Z)=\mathbb{E}[-\log Q_{Y|Z}(Y|Z)]- D_{\mathrm{KL}}(P_{Y|Z}||Q_{Y|Z}).
	\label{eq-var-bound-cond-entropy}
\end{equation}
And from \cite[eq. (81)]{aguerri2019distributed}:
\begin{align}
	I(X;Z) &= H(Z)-H(Z|X)\nonumber\\
	&=D_{\mathrm{KL}}(P_{Z|X}\Vert Q_{Z}) -D_{\mathrm{KL}}(P_{Z}\Vert Q_{Z}).\label{eq-var-bound-mutual-info}
\end{align}
Now substituting Equations \eqref{eq-var-bound-cond-entropy} and \eqref{eq-var-bound-mutual-info} in \eqref{loss_prob_function_hops_5} the following result is obtained:
\begin{align}
	\mc L_s^{\text{low}}(\bm P_+)=&-H(Y|U_1,U_4)-sI(X_1;U_1)-2sI(X_2;U_2)\nonumber\\
	&-2sI(X_3;U_3)+2s\Big[ I(U_2;U_1)+I(U_3;U_1,U_2)\Big]\nonumber\\
	=&\mathbb{E}[\log Q_{Y|U_1,U_4}]+D_{\mathrm{KL}}(P_{Y|U_1,U_4}||Q_{Y|U_1,U_4})\nonumber\\
	&-sD_{\mathrm{KL}}(P_{U_1|X_1}\Vert Q_{U_1})+sD_{\mathrm{KL}}(P_{U_1}\Vert Q_{U_1})\nonumber\\
	&-2sD_{\mathrm{KL}}(P_{U_2|X_2}\Vert Q_{U_2}) +2sD_{\mathrm{KL}}(P_{U_2}\Vert Q_{U_2})\nonumber\\
	&-2sD_{\mathrm{KL}}(P_{U_3|X_3}\Vert Q_{U_3}) +2sD_{\mathrm{KL}}(P_{U_3}\Vert Q_{U_3})\nonumber\\
	&+2sD_{\mathrm{KL}}(P_{U_2|U_1}\Vert Q_{U_2}) -2sD_{\mathrm{KL}}(P_{U_2}\Vert Q_{U_2})\nonumber\\
	&+2sD_{\mathrm{KL}}(P_{U_3|U_1,U_2}\Vert Q_{U_3}) -2sD_{\mathrm{KL}}(P_{U_3}\Vert Q_{U_3})\nonumber\\
%	=&\mathbb{E}[\log Q_{Y|U_1,U_4}]+D_{\mathrm{KL}}(P_{Y|U_1,U_4}||Q_{Y|U_1,U_4})\nonumber\\
%	&-sD_{\mathrm{KL}}(P_{U_1|X_1}\Vert Q_{U_1})+sD_{\mathrm{KL}}(P_{U_1}\Vert Q_{U_1})\nonumber\\
%	&-2sD_{\mathrm{KL}}(P_{U_2|X_2}\Vert Q_{U_2})\nonumber\\
%	&-2sD_{\mathrm{KL}}(P_{U_3|X_3}\Vert Q_{U_3})\nonumber\\
%	&+2sD_{\mathrm{KL}}(P_{U_2|U_1}\Vert Q_{U_2})\nonumber\\
%	&+2sD_{\mathrm{KL}}(P_{U_3|U_1,U_2}\Vert Q_{U_3})\nonumber\\
%	=&\mathbb{E}[\log Q_{Y|U_1,U_4}]-sD_{\mathrm{KL}}(P_{U_1|X_1}\Vert Q_{U_1})\nonumber\\
%	&-2sD_{\mathrm{KL}}(P_{U_2|X_2}\Vert Q_{U_2})-2sD_{\mathrm{KL}}(P_{U_3|X_3}\Vert Q_{U_3})\nonumber\\
%	&+sD_{\mathrm{KL}}(P_{U_1}\Vert Q_{U_1})+2sD_{\mathrm{KL}}(P_{U_2|U_1}\Vert Q_{U_2})\nonumber\\
%	&+2sD_{\mathrm{KL}}(P_{U_3|U_1,U_2}\Vert Q_{U_3})+D_{\mathrm{KL}}(P_{Y|U_1,U_4}||Q_{Y|U_1,U_4})\nonumber\\
%	=&\mathbb{E}[\log Q_{Y|U_1,U_4}]-sD_{\mathrm{KL}}(P_{U_1|X_1}\Vert Q_{U_1})\nonumber\\
%	&-2sD_{\mathrm{KL}}(P_{U_2|X_2}\Vert Q_{U_2})-2sD_{\mathrm{KL}}(P_{U_3|X_3}\Vert Q_{U_3})\nonumber\\
%	&+sD_{\mathrm{KL}}(P_{U_1}\Vert Q_{U_1})+2sD_{\mathrm{KL}}(P_{U_2|U_1}\Vert Q_{U_2})\nonumber\\
%	&+2sD_{\mathrm{KL}}(P_{U_3|U_1,U_2}\Vert Q_{U_3})+D_{\mathrm{KL}}(P_{Y|U_1,U_4}||Q_{Y|U_1,U_4})\nonumber\\
	=&\mc L_s^{\text{v-low}}+sD_{\mathrm{KL}}(P_{U_1}\Vert Q_{U_1})+2sD_{\mathrm{KL}}(P_{U_2|U_1}\Vert Q_{U_2})\nonumber\\
	&+2sD_{\mathrm{KL}}(P_{U_3|U_1,U_2}\Vert Q_{U_3})+D_{\mathrm{KL}}(P_{Y|U_1,U_4}||Q_{Y|U_1,U_4})\nonumber\\
	\geq&\mc L_s^{\text{v-low}} \label{eq-ineq-var-lower-bound}
\end{align}
The last inequality~\eqref{eq-ineq-var-lower-bound} holds due to the fact that KL divergence is always positive and $s\geq 0$, thus proving the lemma.

%=============================================================================
	\vspace{-0.1cm}
	%-------------------------------------------------------------------------------------
	\bibliographystyle{IEEEtran}
	\bibliography{IEEEabrv,references} 	
\end{document}